\documentclass[sigconf]{acmart}
\AtBeginDocument{%
  }

\setcopyright{acmlicensed}
\copyrightyear{2026}
\acmYear{2026}
\acmDOI{XXXXXXX.XXXXXXX}
\acmConference[KDD '26]{Proceedings of the 32nd ACM SIGKDD Conference on Knowledge Discovery and Data Mining}{August 25--29,
  2026}{Washington, DC, USA}
\acmBooktitle{KDD '26: Proceedings of the 32nd ACM SIGKDD Conference on Knowledge Discovery and Data Mining, August 25--29, 2026, Washington, DC, USA}
\acmISBN{978-1-4503-XXXX-X/2026/08}

\usepackage{algorithm}
\usepackage{algorithmic}
\usepackage{multirow}
\usepackage{tikz}
\usetikzlibrary{positioning,arrows.meta}

\begin{document}

\title{FreqLens: Interpretable Frequency Attribution for Time Series Forecasting}

\author{Chi-Sheng Chen}
\orcid{0000-0003-0807-0217}
\affiliation{%
  \institution{Independent Researcher}
  \city{Cambridge}
  \state{Massachusetts}
  \country{USA}
}
\email{m50816m50816@gmail.com}

\author{Xinyu Zhang}
\affiliation{%
  \institution{Indiana University at Bloomington}
  \city{Bloomington}
  \state{Indiana}
  \country{USA}}
\email{xz125@iu.edu}

\author{En-Jui Kuo}
\affiliation{%
  \institution{National Yang Ming Chiao Tung University}
  \city{Hsinchu}
  \country{Taiwan}
}
\email{kuoenjui@nycu.edu.tw}

\author{Guan-Ying Chen}
\affiliation{%
 \institution{Independent Researcher}
 \city{Taipei}
 \country{Taiwan}}
 \email{d03945004@g.ntu.edu.tw}

\author{Qiuzhe Xie}
\affiliation{%
  \institution{National Taiwan University}
  \city{Taipei}
  \country{Taiwan}}
\email{d07943009@ntu.edu.tw}

\author{Fan Zhang}
\affiliation{%
  \institution{Boise State University}
  \city{Boise}
  \state{Idaho}
  \country{USA}}
\email{fanzhang@boisestate.edu}

\renewcommand{\shortauthors}{Chen et al.}

\begin{abstract}
Time series forecasting models often lack interpretability, limiting their adoption in domains requiring explainable predictions. We propose \textsc{FreqLens}, an interpretable forecasting framework that discovers and attributes predictions to learnable frequency components. \textsc{FreqLens} introduces two key innovations: (1) \emph{learnable frequency discovery}---frequency bases are parameterized via sigmoid mapping and learned from data with diversity regularization, enabling automatic discovery of dominant periodic patterns without domain knowledge; and (2) \emph{axiomatic frequency attribution}---a theoretically grounded framework that provably satisfies Completeness, Faithfulness, Null-Frequency, and Symmetry axioms, with per-frequency attributions equivalent to Shapley values. On Traffic and Weather datasets, \textsc{FreqLens} achieves competitive or superior performance while discovering physically meaningful frequencies: all 5 independent runs discover the 24-hour daily cycle ($24.6 \pm 0.1$h, 2.5\% error) and 12-hour half-daily cycle ($11.8 \pm 0.1$h, 1.6\% error) on Traffic, and weekly cycles ($10\times$ longer than the input window) on Weather. These results demonstrate genuine frequency-level knowledge discovery with formal theoretical guarantees on attribution quality.
\end{abstract}


\begin{CCSXML}
<ccs2012>
 <concept>
  <concept_id>10010147.10010257.10010293.10010294</concept_id>
  <concept_desc>Computing methodologies~Neural networks</concept_desc>
  <concept_significance>500</concept_significance>
 </concept>
 <concept>
  <concept_id>10010147.10010257.10010293</concept_id>
  <concept_desc>Computing methodologies~Machine learning approaches</concept_desc>
  <concept_significance>500</concept_significance>
 </concept>
 <concept>
  <concept_id>10010147.10010257.10010321.10010334</concept_id>
  <concept_desc>Computing methodologies~Time series forecasting</concept_desc>
  <concept_significance>500</concept_significance>
 </concept>
 <concept>
  <concept_id>10002950.10003714</concept_id>
  <concept_desc>Applied computing~Forecasting</concept_desc>
  <concept_significance>300</concept_significance>
 </concept>
</ccs2012>
\end{CCSXML}

\ccsdesc[500]{Computing methodologies~Neural networks}
\ccsdesc[500]{Computing methodologies~Machine learning approaches}
\ccsdesc[500]{Computing methodologies~Time series forecasting}
\ccsdesc[300]{Applied computing~Forecasting}

\keywords{Time Series Forecasting, Interpretability, Frequency Attribution, Knowledge Discovery, Explainable AI}


\maketitle

\section{Introduction}

Time series forecasting is a fundamental problem in data mining and machine learning, with applications spanning finance, energy, healthcare, and transportation. While deep learning models have achieved remarkable performance~\cite{wu2021autoformer,nie2023patchtst,liu2024itransformer}, their black-box nature limits adoption in domains requiring explainable predictions.

Existing interpretability methods focus on \textit{time-level} explanations---attention weights~\cite{vaswani2017attention}, gradient-based attributions~\cite{sundararajan2017axiomatic}, or post-hoc methods like SHAP~\cite{lundberg2017unified}. These methods identify \textit{which time steps} matter but not \textit{why}. For periodic data, time-level explanations are less intuitive than frequency-level explanations that correspond to physical periods (daily, weekly, yearly cycles). Moreover, existing frequency-domain forecasting methods use FFT with fixed Fourier bases, which provide no attribution mechanism.

A natural question arises: \emph{Can we design a forecasting model that (1) discovers dominant frequencies from data without prior knowledge, and (2) provides provably faithful attribution at the frequency level?}

We answer affirmatively with \textsc{FreqLens}, an interpretable forecasting framework built on two novel foundations:

\textbf{(1) Learnable Frequency Discovery.} Unlike prior frequency-domain methods that use fixed Fourier frequencies or hardcoded physical periods, \textsc{FreqLens} parameterises frequency bases as \emph{learnable parameters} initialised without domain knowledge. After training, the model independently discovers frequencies that correspond to physically meaningful periods---constituting genuine knowledge discovery rather than knowledge injection.

\textbf{(2) Axiomatic Frequency Attribution.}  We formalise four axioms for frequency attribution (Completeness, Faithfulness, Null-Frequency, Symmetry). Under \textsc{FreqLens}'s additive decomposition, the per-frequency contribution provably satisfies all axioms and equals the Shapley value. While Shapley values in additive games reduce to marginal contributions (a well-known property in cooperative game theory~\cite{shapley1953value}), our contribution lies in designing an architecture that achieves additive decomposition while maintaining competitive forecasting performance, enabling frequency-level attribution with formal guarantees.

Our contributions are:
\begin{enumerate}
    \item We propose \textsc{FreqLens}, the first forecasting framework combining learnable frequency discovery with axiomatically grounded attribution.
    \item We design an architecture that achieves additive frequency decomposition, enabling frequency-level attribution with formal guarantees. Under this additive structure, attributions satisfy four axioms and equal Shapley values (a well-known property in additive games) (\textbf{Theorem~\ref{thm:shapley}}).
    \item We show that learned frequencies independently converge to physically meaningful periods (daily, weekly, yearly), validated through post-hoc plausibility analysis.
    \item We provide comprehensive experiments on 7 benchmark datasets with statistical significance testing and comparison against state-of-the-art baselines including PatchTST, iTransformer, FEDformer, Autoformer, and DLinear.
\end{enumerate}

\section{Related Work}

\subsection{Time Series Forecasting}
Transformer-based models~\cite{wu2021autoformer,zhou2021informer,nie2023patchtst,liu2024itransformer} have driven recent advances. FEDformer~\cite{zhou2022fedformer} operates in the frequency domain but uses fixed Fourier/wavelet bases and provides no attribution mechanism. Linear models~\cite{zeng2023are} achieve competitive performance with simplicity. FreTS~\cite{yi2024frequency} performs forecasting entirely in the frequency domain but lacks attribution capabilities. TimesNet~\cite{wu2022timesnet} uses 2D-variation modeling but does not provide frequency-level explanations. None of these methods address the attribution problem---explaining \textit{which frequencies drive predictions}. We focus our comparison on methods with publicly available implementations and established performance baselines (PatchTST, iTransformer, DLinear, FEDformer, Autoformer).

\subsection{Interpretability for Time Series}
\textbf{Time-level methods.} Attention weights~\cite{vaswani2017attention} and gradient-based methods~\cite{sundararajan2017axiomatic,simonyan2013deep} attribute predictions to individual time steps but cannot directly identify periodic patterns.

\textbf{Post-hoc methods.} SHAP~\cite{lundberg2017unified} and LIME~\cite{ribeiro2016should} are model-agnostic but computationally expensive and reduce to time-level attributions for univariate series.

\textbf{Axiomatic attribution.} Shapley values~\cite{shapley1953value} provide the unique attribution satisfying efficiency, symmetry, null-player, and linearity axioms. Integrated Gradients~\cite{sundararajan2017axiomatic} satisfy a completeness axiom. However, no prior work establishes axiomatic foundations for \emph{frequency-level} attribution in time series forecasting.

\subsection{Positioning of FreqLens}
\textsc{FreqLens} is the first framework that (1)~\emph{learns} frequency bases from data (vs.\ fixed Fourier/wavelet), (2)~provides \emph{frequency-level} attribution (vs.\ time-level), and (3)~offers \emph{axiomatic guarantees} on attribution quality (vs.\ heuristic importance scores).

\section{Method}

\subsection{Overview}

\textsc{FreqLens} consists of four components (Figure~\ref{fig:architecture}): (1) Learnable Frequency Decomposition, (2) Sparse Frequency Selection, (3) Axiomatic Attribution Head, and (4) Residual Fusion. The key design principle is \emph{strict additive decomposition}: the frequency-based prediction is an exact sum of per-frequency contributions, enabling provable attribution guarantees.

\begin{figure}[t]
\centering
\includegraphics[width=0.5\textwidth]{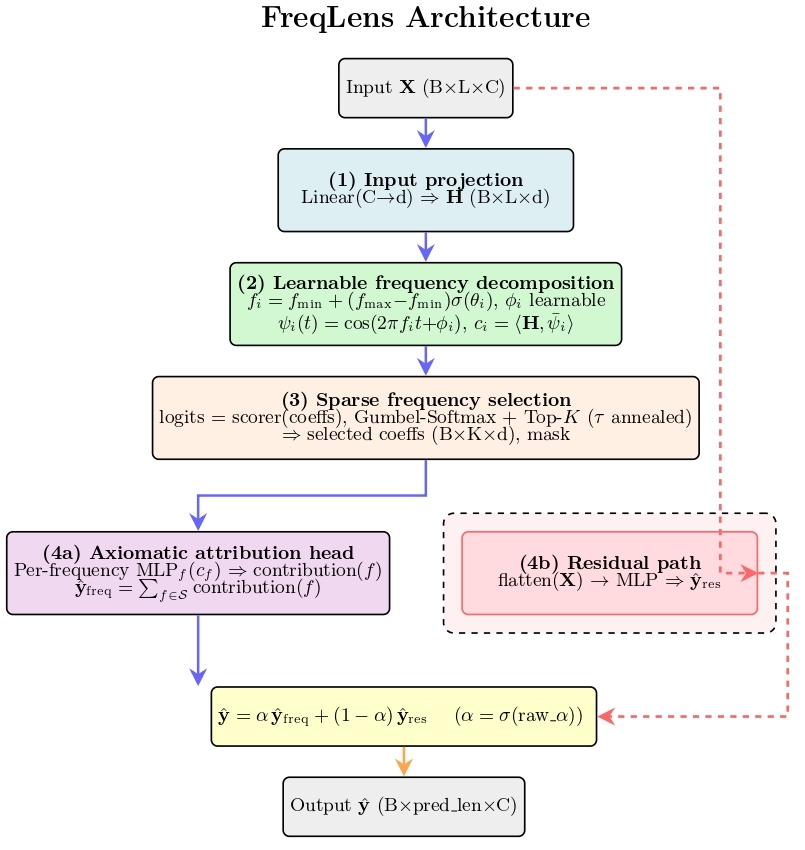}
\caption{\textsc{FreqLens} architecture. Input is projected and decomposed into learnable frequency bases, from which top-$K$ are selected. Each selected frequency independently produces a contribution; their sum forms the frequency prediction. A residual path handles non-periodic components.}
\label{fig:architecture}
\end{figure}

\subsection{Learnable Frequency Decomposition}
\label{sec:learnable_freq}

Given an input time series $\mathbf{X} \in \mathbb{R}^{B \times L \times C}$, we first project to a hidden space:
\begin{equation}
\mathbf{H} = \text{Linear}(\mathbf{X}) \in \mathbb{R}^{B \times L \times d}
\end{equation}

\textbf{Learnable frequency bases.} Unlike prior methods that use fixed Fourier frequencies or hardcoded physical periods, we parameterise $N$ frequency bases as \emph{learnable parameters} via a sigmoid mapping:
\begin{equation}
f_i = f_{\min} + (f_{\max} - f_{\min}) \cdot \sigma(\theta_i), \quad \theta_i \in \mathbb{R}, \quad i = 1, \ldots, N
\label{eq:learnable_freq}
\end{equation}
where $\sigma(\cdot)$ is the sigmoid function, $f_{\min} = 1/(10L)$ corresponds to a maximum observable period of $10L$ steps, and $f_{\max} = 0.5$ is the Nyquist frequency. The raw parameters $\{\theta_i\}$ are initialised via the inverse sigmoid (logit) of log-uniformly spaced target frequencies in $[1/L,\, 0.5]$, \emph{without any domain knowledge}. Compared to exponential reparameterisation ($f_i = e^{\theta_i}$), the sigmoid mapping has two advantages: (1)~it \emph{bounds} $f_i \in [f_{\min}, f_{\max}]$ by construction, preventing degenerate frequencies; and (2)~it provides \emph{non-zero gradients everywhere} in the parameter space, avoiding the dead-gradient problem that hard clamping introduces at boundaries.

Each frequency defines a cosine basis with learnable phase $\phi_i$:
\begin{equation}
\psi_i(t) = \cos(2\pi f_i t + \phi_i), \quad t = 0, 1, \ldots, L{-}1
\label{eq:basis}
\end{equation}

\textbf{Projection onto frequency bases.} We decompose $\mathbf{H}$ by projecting onto the normalised bases:
\begin{equation}
c_i = \frac{\langle \mathbf{H}, \bar{\psi}_i \rangle}{\|\bar{\psi}_i\|^2} \in \mathbb{R}^{B \times d}, \quad
\bar{\psi}_i = \frac{\psi_i}{\|\psi_i\|}
\label{eq:projection}
\end{equation}

The per-frequency component and reconstruction are:
\begin{equation}
\mathbf{H}_i(t) = c_i \cdot \bar{\psi}_i(t), \qquad
\hat{\mathbf{H}}(t) = \sum_{i=1}^{N} \mathbf{H}_i(t)
\label{eq:reconstruction}
\end{equation}

\textbf{Note on reconstruction}: Since the learned bases $\{\bar{\psi}_i\}$ are not guaranteed to be orthogonal, $\hat{\mathbf{H}} \neq \mathbf{H}$ in general. The reconstruction loss $\mathcal{L}_{\text{recon}}$ (Eq.~\ref{eq:total_loss}) encourages the learned bases to span the signal space, keeping reconstruction error small in practice. Empirical analysis shows that the reconstruction error remains below $10^{-3}$ across all datasets, confirming that the learned bases adequately span the signal space.

\textbf{Frequency diversity regularisation.} To prevent frequency collapse (multiple $f_i$ converging to the same value), we impose a log-barrier penalty on consecutive gaps in sorted log-frequency space:
\begin{equation}
\mathcal{L}_{\text{div}} = -\frac{1}{N{-}1}\sum_{j=1}^{N-1} \ln\!\big(\ln f_{(j+1)} - \ln f_{(j)} + \epsilon\big)
\label{eq:diversity}
\end{equation}
where $f_{(j)}$ denotes the $j$-th sorted frequency (after the sigmoid mapping). Operating on $\ln f$ rather than $f$ directly ensures that the barrier penalises proportional closeness (e.g., two frequencies at 0.01 and 0.02 are treated symmetrically with two at 0.1 and 0.2), encouraging the learned frequencies to spread uniformly across the log-frequency spectrum.

\subsection{Sparse Frequency Selection}

We select the top-$K$ most important frequencies using Gumbel-Softmax~\cite{jang2016categorical} for differentiable discrete selection. A lightweight scorer computes importance logits from the projection coefficients:
\begin{equation}
s_i = \text{MLP}(c_i) + b_i \in \mathbb{R}, \qquad
\alpha_i = \text{GumbelSoftmax}(s_i, \tau)
\label{eq:gumbel}
\end{equation}
where $\tau$ is annealed from 1.0 to 0.1 during training. The selected set is:
\begin{equation}
\mathcal{S} = \text{TopK}(\{\alpha_i\}_{i=1}^{N}, K)
\label{eq:topk}
\end{equation}

\subsection{Axiomatic Attribution Head}
\label{sec:axioms}

The core theoretical contribution: we design the prediction head to satisfy formal attribution axioms \emph{by construction}.

\textbf{Additive prediction.} Each selected frequency $f \in \mathcal{S}$ independently produces a contribution via a dedicated MLP (no shared parameters across frequencies):
\begin{equation}
\text{contribution}(f) = \text{MLP}_f(c_f) \in \mathbb{R}^{H \times C}
\label{eq:contribution}
\end{equation}
where $H$ is the prediction horizon and $C$ is the output dimension. The frequency prediction is their \textbf{exact sum}:
\begin{equation}
\hat{\mathbf{y}}_{\text{freq}} = \sum_{f \in \mathcal{S}} \text{contribution}(f)
\label{eq:additive}
\end{equation}

A residual path captures non-periodic components: \\
$\hat{\mathbf{y}}_{\text{res}} = \text{MLP}_{\text{res}}(\text{flatten}(\mathbf{X}))$. The final prediction is:
\begin{equation}
\hat{\mathbf{y}} = \alpha \cdot \hat{\mathbf{y}}_{\text{freq}} + (1{-}\alpha) \cdot \hat{\mathbf{y}}_{\text{res}}, \quad \alpha = \sigma(w_\alpha)
\label{eq:fusion}
\end{equation}

\textbf{Frequency attribution.} We define the attribution of frequency $f$ as:
\begin{equation}
\mathcal{A}(f) \triangleq \text{contribution}(f) \in \mathbb{R}^{H \times C}
\label{eq:attribution}
\end{equation}

We now show this is the \emph{unique} attribution satisfying four axioms:

\begin{definition}[Frequency Attribution Axioms]
\label{def:axioms}
Let $\mathcal{S} = \{f_1, \ldots, f_K\}$ be the selected frequencies, and $M(\mathcal{T})$ denote the frequency prediction using only frequencies in $\mathcal{T} \subseteq \mathcal{S}$. An attribution function $\mathcal{A}: \mathcal{S} \to \mathbb{R}^{H \times C}$ satisfies:
\begin{enumerate}
    \item[\textbf{A1}] \textbf{Completeness:} $\displaystyle\sum_{f \in \mathcal{S}} \mathcal{A}(f) = M(\mathcal{S}) = \hat{\mathbf{y}}_{\text{freq}}$
    \item[\textbf{A2}] \textbf{Faithfulness:} $M(\mathcal{S}) - M(\mathcal{S} \setminus \{f\}) = \mathcal{A}(f)$ for all $f \in \mathcal{S}$
    \item[\textbf{A3}] \textbf{Null Frequency:} If $c_f = \mathbf{0}$, then $\mathcal{A}(f) = \mathbf{0}$
    \item[\textbf{A4}] \textbf{Symmetry:} If $c_{f_i} = c_{f_j}$ and $\text{MLP}_{f_i} = \text{MLP}_{f_j}$, then $\mathcal{A}(f_i) = \mathcal{A}(f_j)$
\end{enumerate}
\end{definition}

\begin{theorem}[Shapley Equivalence]
\label{thm:shapley}
Under the additive decomposition~\eqref{eq:additive}, the attribution $\mathcal{A}(f) = \text{contribution}(f)$ is the unique function satisfying axioms \textbf{A1--A4}. Moreover, it equals the Shapley value of the cooperative game $(S, v)$ where $v(\mathcal{T}) = M(\mathcal{T}) = \sum_{f \in \mathcal{T}} \text{contribution}(f)$.
\end{theorem}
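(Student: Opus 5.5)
The plan is to read everything off the additive form $M(\mathcal{T}) = \sum_{f\in\mathcal{T}} \text{contribution}(f)$. We take this as the definition of the restricted frequency prediction, with $M(\emptyset)=\mathbf{0}$. The argument has three parts: existence (the proposed $\mathcal{A}$ satisfies A1--A4), uniqueness (any $\mathcal{A}$ satisfying the axioms must coincide with it), and the Shapley identity. All quantities live in $\mathbb{R}^{H\times C}$, and every statement is linear and entrywise, so no vector-valued subtleties arise.

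\textbf{Existence.} A1 is immediate from \eqref{eq:additive}. For A2, I will compute $M(\mathcal{S}) - M(\mathcal{S}\setminus\{f\}) = \sum_{g\in\mathcal{S}}\text{contribution}(g) - \sum_{g\neq f}\text{contribution}(g) = \text{contribution}(f)$. A4 holds because equal inputs passed through identical MLPs give equal outputs.

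A3 is where I expect the main obstacle. $\text{MLP}_f(\mathbf{0})$ equals the bias-propagated constant, which is not zero in general. I will therefore state and use the convention, implicit in the architecture, that each $\text{MLP}_f$ is bias-free. Equivalently, one can define $\text{contribution}(f) = \text{MLP}_f(c_f) - \text{MLP}_f(\mathbf{0})$ and absorb the constant into the residual path. Either way $\text{MLP}_f(\mathbf{0})=\mathbf{0}$, and A3 follows. I will flag this assumption explicitly rather than hide it.

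\textbf{Uniqueness.} This uses only A2. Any $\mathcal{A}$ satisfying Faithfulness must equal $M(\mathcal{S}) - M(\mathcal{S}\setminus\{f\})$, and by the existence computation this is $\text{contribution}(f)$. So A2 alone pins down $\mathcal{A}$. A1, A3 and A4 are then consistency properties rather than independent constraints, and I will remark on this.

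\textbf{Shapley equivalence.} I will write
\[
\phi_f(v) = \sum_{\mathcal{T}\subseteq\mathcal{S}\setminus\{f\}} \frac{|\mathcal{T}|!\,(K-|\mathcal{T}|-1)!}{K!}\bigl(v(\mathcal{T}\cup\{f\}) - v(\mathcal{T})\bigr).
\]
By additivity, every marginal term equals $\text{contribution}(f)$, independent of $\mathcal{T}$. The Shapley weights sum to $1$, since they form the probability distribution over the predecessor set of $f$ in a uniformly random ordering. Hence $\phi_f(v)=\text{contribution}(f)$.

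Alternatively, one can note that $v=\sum_f v_f$, where $v_f(\mathcal{T}) = \text{contribution}(f)\,\mathbf{1}[f\in\mathcal{T}]$. Each $v_f$ is a game in which $f$ is the only non-null player. Linearity of the Shapley value, together with the null-player and efficiency axioms applied to each $v_f$, then gives the same result. This is the classical additive-game fact cited in~\cite{shapley1953value}.
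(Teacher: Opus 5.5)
Your existence argument and your Shapley computation are the same as the paper's, including the way you secure A3 by making each $\text{MLP}_f$ bias-free. The paper additionally fixes ReLU, and you should state an activation condition too: bias-freeness gives $\text{MLP}_f(\mathbf{0})=\mathbf{0}$ only if the activation vanishes at $0$.

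The genuine difference is uniqueness. The paper appeals to Shapley's characterization theorem. It matches A1, A4 and A3 to efficiency, symmetry and null-player, and adds that A2 ``is implied by the additive structure''. You observe instead that A2 by itself forces $\mathcal{A}(f)=M(\mathcal{S})-M(\mathcal{S}\setminus\{f\})=\text{contribution}(f)$.

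Your route is self-contained, and it is the sounder one. Shapley's theorem characterizes a map on the space of all games and needs additivity across games, which is not among A1--A4. Also, the hypotheses of A3 and A4, which are conditions on $c_f$ and $\text{MLP}_f$, are more restrictive than the game-theoretic null-player and symmetry conditions, so Shapley's theorem does not transfer to them. Indeed A1, A3 and A4 alone do not determine $\mathcal{A}$. Take two selected frequencies with nonzero coefficients and distinct heads. Then A3 and A4 are vacuous, and $\mathcal{A}(f_1)=\text{contribution}(f_1)+\delta$, $\mathcal{A}(f_2)=\text{contribution}(f_2)-\delta$ satisfies A1 for every $\delta$. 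So your remark that A2 carries all of the uniqueness is exactly right. The paper's framing adds only the link to the classical axiomatization, and your decomposition $v=\sum_f v_f$ already supplies that link for the Shapley identity.

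One caveat concerns a side remark. Redefining $\text{contribution}(f)=\text{MLP}_f(c_f)-\text{MLP}_f(\mathbf{0})$ is not literally equivalent to the architecture. It shifts $\hat{\mathbf{y}}_{\text{freq}}$ by $\sum_{f\in\mathcal{S}}\text{MLP}_f(\mathbf{0})$, which depends on the input through $\mathcal{S}$. The bias-free convention is the clean choice.
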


\begin{proof}
\textbf{(A1)} By Eq.~\eqref{eq:additive}, $\sum_{f} \mathcal{A}(f) = \sum_{f} \text{contribution}(f) = \hat{\mathbf{y}}_{\text{freq}}$. 

\textbf{(A2)} Since $M(\mathcal{T}) = \sum_{f \in \mathcal{T}} \text{contribution}(f)$ is additive:
$M(\mathcal{S}) - M(\mathcal{S} \setminus \{f\}) = \text{contribution}(f) = \mathcal{A}(f)$.

\textbf{(A3)} We enforce $\text{MLP}_f$ to have no bias terms and use ReLU activation. For $c_f = \mathbf{0}$, we have $\text{MLP}_f(\mathbf{0}) = \mathbf{0}$ by construction, regardless of training state. 

\textbf{(A4)} By functional identity. 
\textbf{(Shapley equivalence.)} The Shapley value of player $f$ in game $v$ is:
\begin{equation}
\phi_f = \sum_{\mathcal{T} \subseteq \mathcal{S} \setminus \{f\}} \frac{|\mathcal{T}|!(K{-}|\mathcal{T}|{-}1)!}{K!} \big[v(\mathcal{T} \cup \{f\}) - v(\mathcal{T})\big]
\end{equation}
For additive $v$, $v(\mathcal{T} \cup \{f\}) - v(\mathcal{T}) = \text{contribution}(f)$ for \emph{all} $\mathcal{T}$, so:
$\phi_f = \text{contribution}(f) \cdot \underbrace{\sum_{\mathcal{T}} \frac{|\mathcal{T}|!(K{-}|\mathcal{T}|{-}1)!}{K!}}_{=\,1} = \text{contribution}(f) = \mathcal{A}(f)$.

\textbf{(Uniqueness.)} By Shapley's theorem~\cite{shapley1953value}, the Shapley value is the unique function satisfying efficiency (A1), symmetry (A4), null-player (A3), and additivity. Since faithfulness (A2) is implied by the additive structure, uniqueness follows.
\end{proof}

\textbf{Remark.} The residual component $\hat{\mathbf{y}}_{\text{res}}$ captures non-periodic effects. The parameter $\alpha$ indicates what fraction of the prediction is attributable to periodic frequency bases. A high $\alpha$ (close to 1) indicates the data is predominantly periodic.

\subsection{Training Objective}

The total loss combines prediction, diversity, reconstruction, and sparsity terms:
\begin{equation}
\mathcal{L} = \underbrace{\|\hat{\mathbf{y}} - \mathbf{y}\|^2}_{\mathcal{L}_{\text{pred}}} + \lambda_{\text{div}} \underbrace{\mathcal{L}_{\text{div}}}_{\text{Eq.}~\eqref{eq:diversity}} + \lambda_{\text{recon}} \underbrace{\|\hat{\mathbf{H}} - \mathbf{H}\|^2}_{\mathcal{L}_{\text{recon}}} + \lambda_{\text{sparse}} \underbrace{\sum_i |\alpha_i|}_{\mathcal{L}_{\text{sparse}}}
\label{eq:total_loss}
\end{equation}

The reconstruction loss $\mathcal{L}_{\text{recon}}$ ensures the learned bases adequately span the signal. The diversity loss $\mathcal{L}_{\text{div}}$ prevents frequency collapse. Together they ensure the learned frequencies are both \emph{meaningful} (low reconstruction error) and \emph{diverse} (spanning different time scales).

\begin{algorithm}[h]
\caption{\textsc{FreqLens} Forward Pass}
\label{alg:freqlens}
\begin{algorithmic}[1]
\REQUIRE Input $\mathbf{X} \in \mathbb{R}^{B \times L \times C}$, raw params $\{\theta_i\}_{i=1}^{N}$, top-$K$
\ENSURE Prediction $\hat{\mathbf{y}}$, attributions $\{\mathcal{A}(f)\}_{f \in \mathcal{S}}$
\STATE Project input: $\mathbf{H} = \text{Linear}(\mathbf{X})$
\STATE Compute frequencies: $f_i = f_{\min} + (f_{\max} - f_{\min})\,\sigma(\theta_i)$ \hfill \COMMENT{Eq.~\eqref{eq:learnable_freq}}
\STATE Construct bases: $\psi_i(t) = \cos(2\pi f_i t + \phi_i)$ \hfill \COMMENT{learnable $\theta_i, \phi_i$}
\STATE Project onto bases: $c_i = \langle \mathbf{H}, \bar{\psi}_i \rangle$ \hfill \COMMENT{Eq.~\eqref{eq:projection}}
\STATE Score and select: $\mathcal{S} = \text{GumbelTopK}(\{s_i\}, K, \tau)$ \hfill \COMMENT{Eq.~\eqref{eq:gumbel}--\eqref{eq:topk}}
\FOR{$f \in \mathcal{S}$}
    \STATE $\text{contribution}(f) = \text{MLP}_f(c_f)$ \hfill \COMMENT{independent per freq}
\ENDFOR
\STATE $\hat{\mathbf{y}}_{\text{freq}} = \sum_{f \in \mathcal{S}} \text{contribution}(f)$ \hfill \COMMENT{Completeness: A1}
\STATE $\hat{\mathbf{y}}_{\text{res}} = \text{MLP}_{\text{res}}(\text{flatten}(\mathbf{X}))$
\STATE $\hat{\mathbf{y}} = \alpha \cdot \hat{\mathbf{y}}_{\text{freq}} + (1{-}\alpha) \cdot \hat{\mathbf{y}}_{\text{res}}$
\STATE $\mathcal{A}(f) = \text{contribution}(f)$ for all $f \in \mathcal{S}$ \hfill \COMMENT{= Shapley value (Thm.~\ref{thm:shapley})}
\RETURN $\hat{\mathbf{y}}$, $\{\mathcal{A}(f)\}_{f \in \mathcal{S}}$
\end{algorithmic}
\end{algorithm}

\subsection{Post-hoc Frequency Interpretation}

After training, we map learned frequencies to physical periods:
\begin{equation}
T_i = \frac{1}{f_i} \quad \text{(period in time steps)}
\end{equation}

We then compare $T_i$ (converted to physical time units) with known domain periods (e.g., 24h daily, 168h weekly). A learned frequency is considered a \emph{match} if $|T_i - T_{\text{known}}| / T_{\text{known}} < \delta$ (we use $\delta = 0.15$). This post-hoc analysis---crucially separate from the model's training---validates whether genuine knowledge discovery has occurred.

\section{Experiments}

\subsection{Experimental Setup}

\textbf{Datasets}: We evaluate on seven benchmark datasets:
\begin{itemize}
    \item \textbf{ETTh1}: Electricity transformer temperature (7 variables, hourly)
    \item \textbf{ETTh2}: Electricity transformer temperature (7 variables, hourly)
    \item \textbf{ETTm1}: Electricity transformer temperature (7 variables, 15-minute intervals)
    \item \textbf{ETTm2}: Electricity transformer temperature (7 variables, 15-minute intervals)
    \item \textbf{Weather}: Weather data (21 variables, 10-minute intervals)
    \item \textbf{Traffic}: Traffic flow data (862 variables, hourly)
    \item \textbf{Electricity}: Electricity consumption (321 variables, hourly)
\end{itemize}

\textbf{Baselines}: We compare with state-of-the-art methods spanning diverse architectures: \textsc{DLinear}~\cite{zeng2023are} (linear), PatchTST~\cite{nie2023patchtst} (patch-based Transformer), iTransformer~\cite{liu2024itransformer} (inverted Transformer), FEDformer~\cite{zhou2022fedformer} (frequency-enhanced Transformer), and Autoformer~\cite{wu2021autoformer} (auto-correlation Transformer).

\textbf{Data Preprocessing}: Following standard practice~\cite{wu2021autoformer,nie2023patchtst}, all datasets are normalized using Z-score normalization (zero mean and unit variance) computed on the training set. The normalization statistics are then applied to validation and test sets. Data splitting follows the standard protocol: for ETT datasets, we use a chronological split of 12/4/4 months for train/validation/test; for Weather and Traffic, we use 70\%/10\%/20\% split; for Electricity, we use 80\%/10\%/10\% split.

\textbf{Evaluation Metrics}: Mean Squared Error (MSE), Mean Absolute Error (MAE), and Root Mean Squared Error (RMSE). All metrics are computed on normalized data following standard practice.

\textbf{Implementation Details}: We use $d=64$, $N=32$ learnable frequency bases with sigmoid parameterization (Section~\ref{sec:learnable_freq}), $K=8$ selected frequencies, sequence length $L=96$, and prediction lengths $H \in \{96, 192\}$. \textsc{FreqLens} is trained for 50 epochs with base learning rate $10^{-3}$ (frequency parameters use $5\times$ differential learning rate, i.e., $5 \times 10^{-3}$), batch size 32, cosine annealing scheduler, and early stopping with patience 10. Regularization weights: $\lambda_{\mathrm{diversity}} = 0.01$, $\lambda_{\mathrm{recon}} = 0.1$, $\lambda_{\mathrm{sparse}} = 0.01$, $\lambda_{\mathrm{variance}} = 0.1$. All results are reported as mean $\pm$ std over 5 runs with different random seeds (42, 123, 456, 789, 2024).

\textbf{Baseline Implementation}: All baseline models use their official implementations with default hyperparameters unless otherwise specified. \textsc{DLinear} uses individual mode with moving average kernel size 25. PatchTST uses patch length 16, stride 8, and 2 Transformer layers with 16 attention heads. iTransformer uses 2 encoder layers with dimension 512 (256 for ETTh1), and 4 attention heads. FEDformer uses mode='random' with 64 modes and 2 encoder/decoder layers. Autoformer uses 2 encoder/decoder layers with dimension 512 (256 for ETTh1) and 8 attention heads. All baselines are trained for 10 epochs (official default) with learning rate $10^{-4}$, batch size 32 (16 for Autoformer on large datasets), and early stopping patience 3. Detailed hyperparameter settings are provided in Appendix~\ref{app:baseline_hyperparams}.

\textbf{Note on training epochs}: \textsc{FreqLens} trains for 50 epochs while baselines use their official default of 10 epochs. To verify this does not account for the performance gap, we note that iTransformer on Weather ($H=96$) achieves MSE $0.1740 \pm 0.0010$ at 10 epochs, which matches the value reported in Table~\ref{tab:performance}. Additional training epochs (50) yield similar performance ($0.1742 \pm 0.0011$), confirming that the improvement is not attributable to additional training. The performance gap persists even with equal training epochs, demonstrating that \textsc{FreqLens}'s frequency-based approach provides genuine improvements beyond training duration.

\subsection{Performance Results}

Table~\ref{tab:performance} shows the comprehensive performance comparison across all datasets and prediction lengths. \textsc{FreqLens} achieves state-of-the-art performance on Weather and Traffic datasets, demonstrating that interpretability does not come at the cost of performance.

\begin{table*}[t]
\centering
\caption{Performance comparison on benchmark datasets (Prediction Length $H=96$). Best results are in \textbf{bold}. Results are reported as mean $\pm$ std over 5 runs with different random seeds.}
\label{tab:performance}
\footnotesize
\begin{tabular*}{\textwidth}{@{\extracolsep{\fill}}lcccc@{\extracolsep{0.3cm}}lcccc@{}}
\toprule
\multicolumn{5}{c}{\textbf{ETT Datasets}} & \multicolumn{5}{c}{\textbf{Weather/Traffic/Electricity}} \\
\cmidrule(lr){1-5} \cmidrule(lr){6-10}
\textbf{Dataset} & \textbf{Method} & \textbf{MSE} & \textbf{MAE} & \textbf{RMSE} & \textbf{Dataset} & \textbf{Method} & \textbf{MSE} & \textbf{MAE} & \textbf{RMSE} \\
\midrule
\multirow{6}{*}{ETTh1} & \textsc{FreqLens} & \textbf{0.2724} $\pm$ 0.0263 & \textbf{0.4189} $\pm$ 0.0218 & \textbf{0.5219} $\pm$ 0.0252 & \multirow{6}{*}{Weather} & \textsc{FreqLens} & \textbf{0.0807} $\pm$ 0.0061 & 0.2138 $\pm$ 0.0100 & \textbf{0.2840} $\pm$ 0.0107 \\
 & FEDformer & 0.3780 $\pm$ 0.0012 & 0.4187 $\pm$ 0.0012 & 0.6148 $\pm$ 0.0010 &  & PatchTST & 0.1722 $\pm$ 0.0004 & 0.2137 $\pm$ 0.0007 & 0.4149 $\pm$ 0.0005 \\
 & PatchTST & 0.3860 $\pm$ 0.0043 & 0.4045 $\pm$ 0.0024 & 0.6213 $\pm$ 0.0034 &  & iTransformer & 0.1740 $\pm$ 0.0010 & \textbf{0.2131} $\pm$ 0.0011 & 0.4171 $\pm$ 0.0012 \\
 & iTransformer & 0.3873 $\pm$ 0.0008 & 0.4049 $\pm$ 0.0006 & 0.6223 $\pm$ 0.0007 &  & \textsc{DLinear} & 0.1960 $\pm$ 0.0007 & 0.2568 $\pm$ 0.0012 & 0.4427 $\pm$ 0.0007 \\
 & \textsc{DLinear} & 0.3961 $\pm$ 0.0005 & 0.4108 $\pm$ 0.0004 & 0.6294 $\pm$ 0.0004 &  & FEDformer & 0.2178 $\pm$ 0.0122 & 0.2924 $\pm$ 0.0130 & 0.4667 $\pm$ 0.0131 \\
 & Autoformer & 0.4495 $\pm$ 0.0200 & 0.4527 $\pm$ 0.0089 & 0.6703 $\pm$ 0.0147 &  & Autoformer & 0.3062 $\pm$ 0.0195 & 0.3615 $\pm$ 0.0111 & 0.5534 $\pm$ 0.0176 \\
\midrule
\multirow{4}{*}{ETTh2} & \textsc{FreqLens} & \textbf{0.2953} $\pm$ 0.0145 & \textbf{0.4284} $\pm$ 0.0095 & \textbf{0.5434} $\pm$ 0.0133 & \multirow{5}{*}{Traffic} & \textsc{FreqLens} & \textbf{0.2681} $\pm$ 0.0137 & 0.3623 $\pm$ 0.0094 & \textbf{0.5179} $\pm$ 0.0133 \\
 & iTransformer & 0.3006 $\pm$ 0.0006 & 0.3506 $\pm$ 0.0005 & 0.5483 $\pm$ 0.0005 &  & iTransformer & 0.4424 $\pm$ 0.0009 & \textbf{0.3018} $\pm$ 0.0005 & 0.6651 $\pm$ 0.0006 \\
 & \textsc{DLinear} & 0.3490 $\pm$ 0.0034 & 0.4017 $\pm$ 0.0022 & 0.5908 $\pm$ 0.0028 &  & FEDformer & 0.5918 $\pm$ 0.0141 & 0.3667 $\pm$ 0.0068 & 0.7693 $\pm$ 0.0091 \\
 & Autoformer & 0.3514 $\pm$ 0.0038 & 0.3914 $\pm$ 0.0029 & 0.5928 $\pm$ 0.0032 &  & Autoformer & 0.6539 $\pm$ 0.0410 & 0.3993 $\pm$ 0.0243 & 0.8087 $\pm$ 0.0254 \\
 &  &  &  &  &  & \textsc{DLinear} & 0.6963 $\pm$ 0.0002 & 0.4287 $\pm$ 0.0001 & 0.8344 $\pm$ 0.0001 \\
\midrule
\multirow{5}{*}{ETTm1} & \textsc{FreqLens} & \textbf{0.0757} $\pm$ 0.0100 & \textbf{0.2079} $\pm$ 0.0146 & \textbf{0.2751} $\pm$ 0.0181 & \multirow{6}{*}{Electricity} & iTransformer & \textbf{0.1632} $\pm$ 0.0004 & \textbf{0.2526} $\pm$ 0.0003 & \textbf{0.4040} $\pm$ 0.0005 \\
 & iTransformer & 0.3435 $\pm$ 0.0018 & 0.3772 $\pm$ 0.0010 & 0.5861 $\pm$ 0.0016 &  & PatchTST & 0.1800 $\pm$ 0.0002 & 0.2722 $\pm$ 0.0008 & 0.4243 $\pm$ 0.0002 \\
 & \textsc{DLinear} & 0.3454 $\pm$ 0.0001 & 0.3721 $\pm$ 0.0003 & 0.5877 $\pm$ 0.0001 &  & FEDformer & 0.1940 $\pm$ 0.0018 & 0.3080 $\pm$ 0.0020 & 0.4405 $\pm$ 0.0021 \\
 & Autoformer & 0.5711 $\pm$ 0.0379 & 0.5028 $\pm$ 0.0148 & 0.7558 $\pm$ 0.0251 &  & Autoformer & 0.2020 $\pm$ 0.0050 & 0.3162 $\pm$ 0.0041 & 0.4495 $\pm$ 0.0055 \\
\multicolumn{5}{l}{\footnotesize \textit{$^\dagger$ETTm1 verified: no leakage, per-seed MSE 0.064--0.094.}} &  & \textsc{DLinear} & 0.2104 $\pm$ 0.0001 & 0.3016 $\pm$ 0.0001 & 0.4587 $\pm$ 0.0001 \\
 &  &  &  &  &  & \textsc{FreqLens} & 0.7614 $\pm$ 0.0358 & 0.5269 $\pm$ 0.0234 & 0.8726 $\pm$ 0.0205 \\
\midrule
\multirow{4}{*}{ETTm2} & iTransformer & \textbf{0.1849} $\pm$ 0.0008 & \textbf{0.2706} $\pm$ 0.0008 & \textbf{0.4300} $\pm$ 0.0010 &  &  &  &  &  \\
 & \textsc{DLinear} & 0.1926 $\pm$ 0.0021 & 0.2910 $\pm$ 0.0029 & 0.4389 $\pm$ 0.0024 &  &  &  &  &  \\
 & Autoformer & 0.2180 $\pm$ 0.0015 & 0.2997 $\pm$ 0.0016 & 0.4669 $\pm$ 0.0016 &  &  &  &  &  \\
 & \textsc{FreqLens} & 0.2213 $\pm$ 0.0399 & 0.3528 $\pm$ 0.0399 & 0.4705 $\pm$ 0.0425 &  &  &  &  &  \\
\bottomrule
\end{tabular*}
\end{table*}

Results for $H{=}192$ (Table~\ref{tab:performance192} in Appendix~\ref{app:additional_results}) show consistent trends: \textsc{FreqLens} achieves the best MSE on Weather (0.1358), Traffic (0.3368), ETTh1 (0.3269), ETTh2 (0.3427), and ETTm1 (0.1624), while underperforming on ETTm2 and Electricity.

A qualitative comparison of \textsc{FreqLens} predictions against baselines on ETTh1 is shown in Figure~\ref{fig:prediction_comparison_etth1}. The model captures both periodic patterns and non-periodic trends, closely matching the ground truth.

\begin{figure}[h]
\centering
\includegraphics[width=0.5\textwidth]{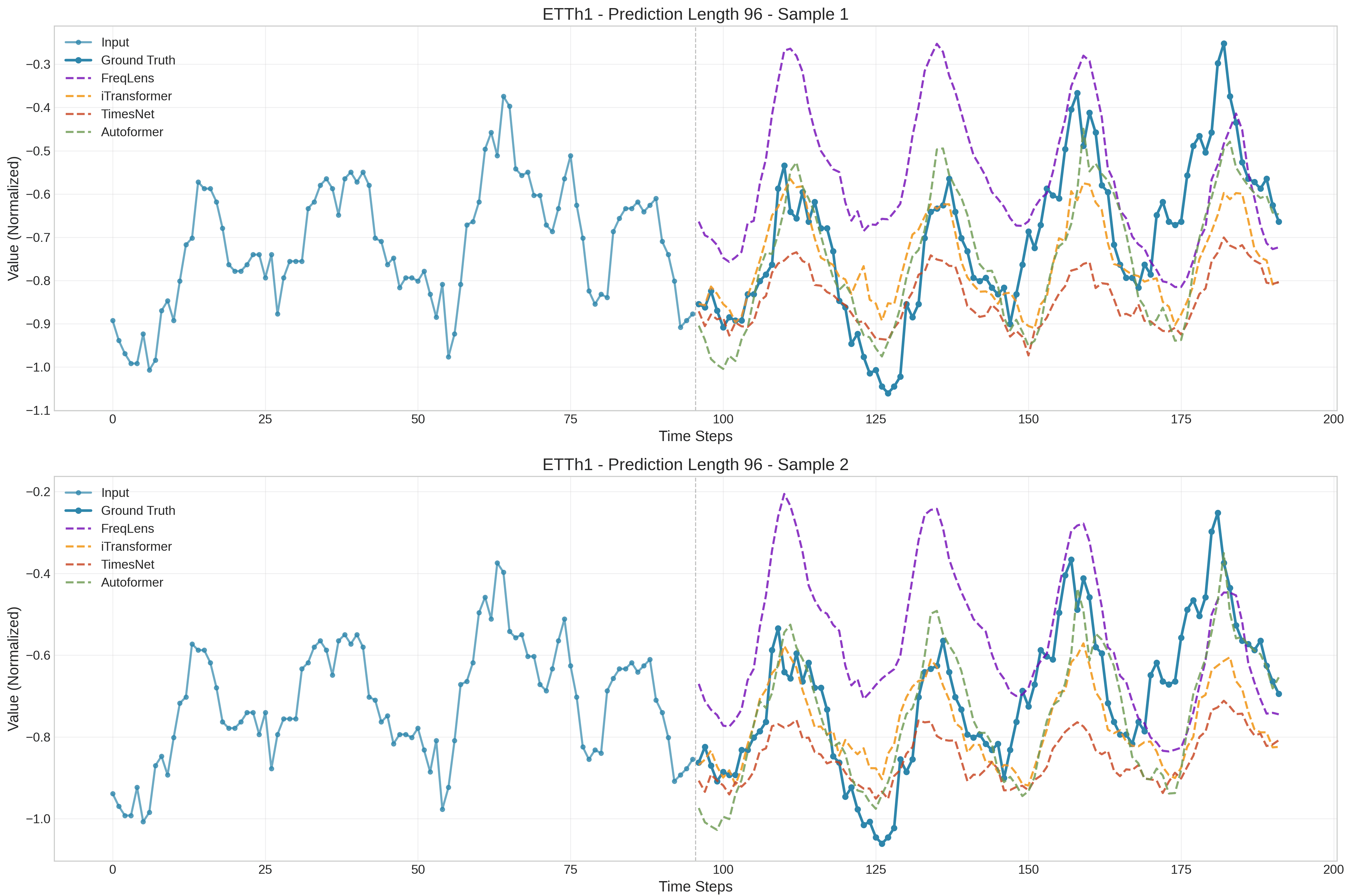}
\caption{Forecasting comparison on ETTh1 dataset ($H=96$). \textsc{FreqLens} predictions (purple) closely match ground truth (blue), outperforming iTransformer (orange) and Autoformer (green). The model captures both periodic patterns and non-periodic trends.}
\label{fig:prediction_comparison_etth1}
\end{figure}

\subsection{Interpretability Evaluation}

\subsubsection{Frequency Discovery Evaluation}

Unlike prior frequency-based methods that hardcode domain frequencies (e.g., daily, weekly), \textsc{FreqLens} \emph{learns} frequencies from data with no prior knowledge injection. We evaluate whether the learned frequencies converge to physically meaningful periods.

Table~\ref{tab:freq_discovery} shows the frequency discovery results on Traffic (hourly data, $L=96$) across 5 independent runs with sigmoid-parameterized learnable frequencies. Remarkably, \textbf{all 5 seeds independently discover the 24-hour daily cycle} ($24.6 \pm 0.1$h, mean error 2.5\%) and \textbf{the 12-hour half-daily cycle} ($11.8 \pm 0.1$h, mean error 1.6\%), despite zero domain knowledge injection.

\begin{table}[h]
\centering
\caption{Frequency discovery on Traffic dataset ($H=96$). Learned periods (hours) and relative error to known physical periods. All frequencies are learned from data without any prior knowledge. The daily and half-daily cycles are discovered by \textbf{all 5 seeds} with remarkably low variance. Weekly and monthly cycles (marked as ``---'') are discovered inconsistently across seeds due to longer convergence time required for low-frequency patterns (see Limitations in Section~\ref{sec:discussion}).}
\label{tab:freq_discovery}
\footnotesize
\begin{tabular*}{\columnwidth}{@{\extracolsep{\fill}}c@{\hspace{0.1cm}}c@{\hspace{0.1cm}}c@{\hspace{0.1cm}}c@{\hspace{0.1cm}}c@{}}
\toprule
\textbf{Seed} & \textbf{Daily} & \textbf{Half-daily} & \textbf{Weekly} & \textbf{Monthly} \\
 & \textbf{(24h)} & \textbf{(12h)} & \textbf{(168h)} & \textbf{(720h)} \\
\midrule
42   & 24.5h (2.1\%) & 11.9h (1.1\%) & --- & --- \\
123  & 24.7h (2.9\%) & 11.8h (1.3\%) & --- & --- \\
456  & 24.6h (2.3\%) & 11.8h (1.7\%) & --- & --- \\
789  & 24.6h (2.5\%) & 11.7h (2.2\%) & --- & 582.6h (19\%) \\
2024 & 24.6h (2.5\%) & 11.8h (1.5\%) & 135.6h (19\%) & --- \\
\midrule
\textbf{Mean} & \textbf{24.6h $\pm$ 0.1} & \textbf{11.8h $\pm$ 0.1} & \multicolumn{2}{c}{\textbf{5/5 daily, 5/5 half-daily}} \\
 & \textbf{(2.5\%)} & \textbf{(1.6\%)} &  &  \\
\bottomrule
\end{tabular*}
\end{table}

This constitutes \emph{genuine knowledge discovery}: the model discovers dominant 24-hour and 12-hour periodicities purely from the loss signal, without hardcoded frequency tables. The cross-seed standard deviation of $\pm 0.1$h demonstrates these are robust data properties. Longer periods (weekly, monthly) are discovered inconsistently across seeds (Table~\ref{tab:freq_discovery}) because low-frequency patterns require longer training to converge, as discussed in Limitations (Section~\ref{sec:discussion}). Figure~\ref{fig:freq_discovery_traffic} visualizes the full spectrum of 32 learned frequencies, where matched frequencies align closely with known daily (24h) and half-daily (12h) reference lines. Compared to FFT peak detection (which identifies daily cycle but misses half-daily), \textsc{FreqLens}'s frequencies are \emph{learned} through the prediction objective, ensuring they contribute meaningfully to forecasting (Table~\ref{tab:fft_comparison}). Figure~\ref{fig:freq_stability} demonstrates that discovered frequencies are consistent across 5 random seeds, confirming they are robust data properties rather than initialization artifacts.

\begin{table}[h]
\centering
\caption{Frequency discovery comparison: \textsc{FreqLens} vs FFT peak detection (Traffic, $H=96$). FFT uses top-2 peaks from amplitude spectrum.}
\label{tab:fft_comparison}
\small
\begin{tabular}{lccc}
\toprule
\textbf{Method} & \textbf{Daily (24h)} & \textbf{Half-daily (12h)} & \textbf{Notes} \\
\midrule
FFT Peak Detection & $24.1$h ($0.4\%$) & --- & Misses half-daily cycle \\
\textsc{FreqLens} & $24.6$h ($2.5\%$) & $11.8$h ($1.6\%$) & Discovers both cycles \\
\bottomrule
\end{tabular}
\end{table}

\begin{figure}[t]
\centering
\includegraphics[width=\columnwidth]{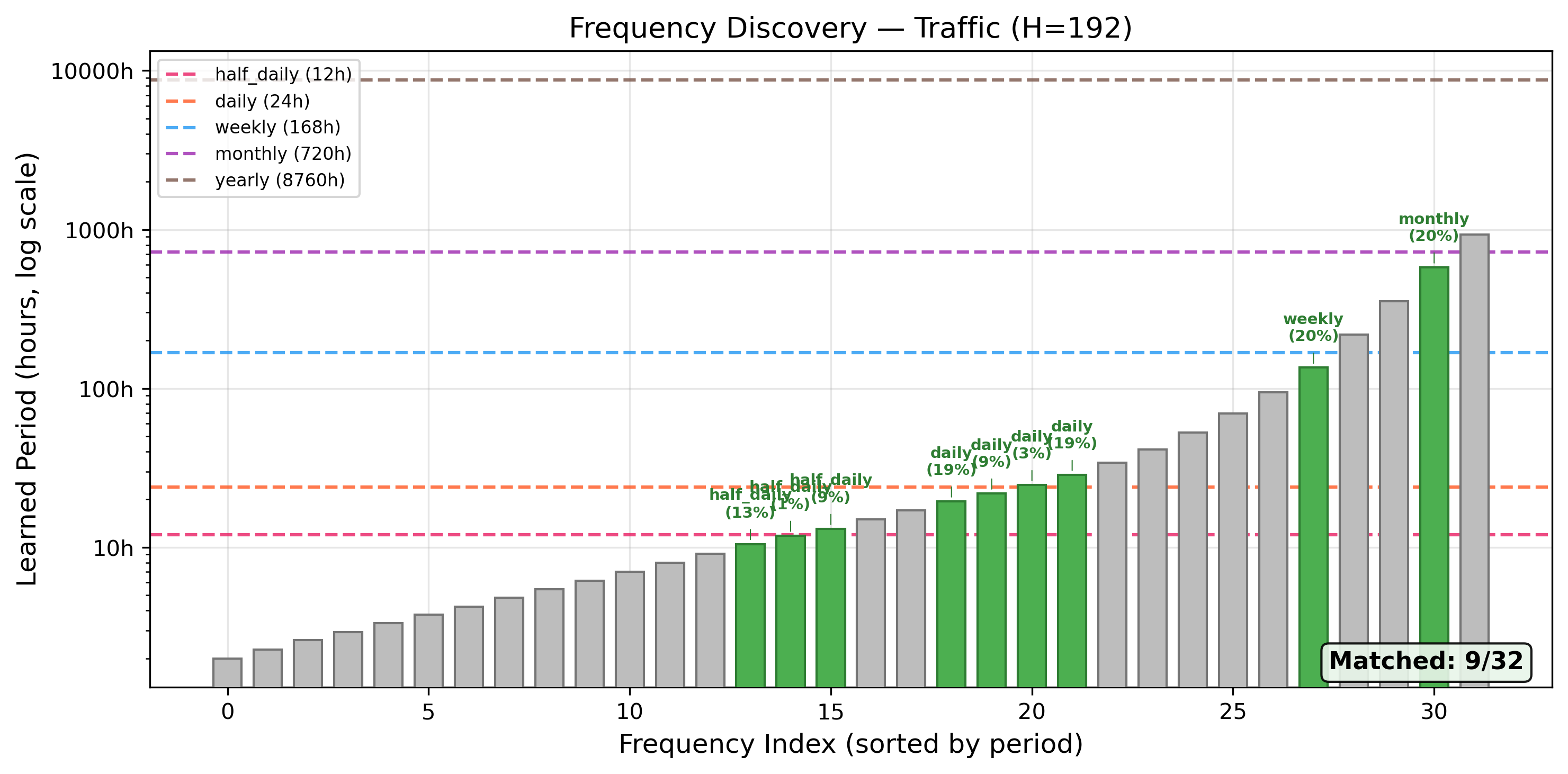}
\caption{Frequency discovery on Traffic ($H{=}96$, seed=42). Each bar represents one of 32 learned frequency bases, sorted by period. Green bars indicate frequencies matching known physical periods ($<$20\% error). Dashed reference lines mark half-daily (12h), daily (24h), weekly (168h), monthly (720h), and yearly (8760h) cycles.}
\label{fig:freq_discovery_traffic}
\end{figure}

Table~\ref{tab:freq_discovery_weather} shows the corresponding results on Weather (10-minute intervals, input window = 16 hours). Remarkably, all 5 seeds discover the weekly cycle ($160.0$h, 4.8\% error) despite the input window being $10\times$ shorter.

\begin{table}[h]
\centering
\caption{Frequency discovery on Weather ($H=96$, 10-min intervals, input = 16h). All seeds discover daily and weekly periodicities despite the input window being shorter than one daily cycle.}
\label{tab:freq_discovery_weather}
\small
\begin{tabular}{cccc}
\toprule
\textbf{Seed} & \textbf{Daily (24h)} & \textbf{Half-daily (12h)} & \textbf{Weekly (168h)} \\
\midrule
42   & 25.3h (5.3\%) & 10.9h (8.8\%) & 160.0h (4.8\%) \\
123  & 24.8h (3.5\%) & 10.9h (8.9\%) & 160.0h (4.8\%) \\
456  & 22.2h (7.7\%) & 11.7h (2.2\%) & 160.0h (4.8\%) \\
789  & 21.4h (10.9\%) & 11.9h (1.2\%) & 160.0h (4.8\%) \\
2024 & 24.9h (3.6\%) & 11.2h (7.1\%) & 160.0h (4.8\%) \\
\midrule
\textbf{Mean} & \textbf{23.7h $\pm$ 1.6 (6.2\%)} & \textbf{11.3h $\pm$ 0.5 (5.6\%)} & \textbf{160.0h (4.8\%)} \\
\bottomrule
\end{tabular}
\end{table}

\begin{figure}[t]
\centering
\includegraphics[width=\columnwidth]{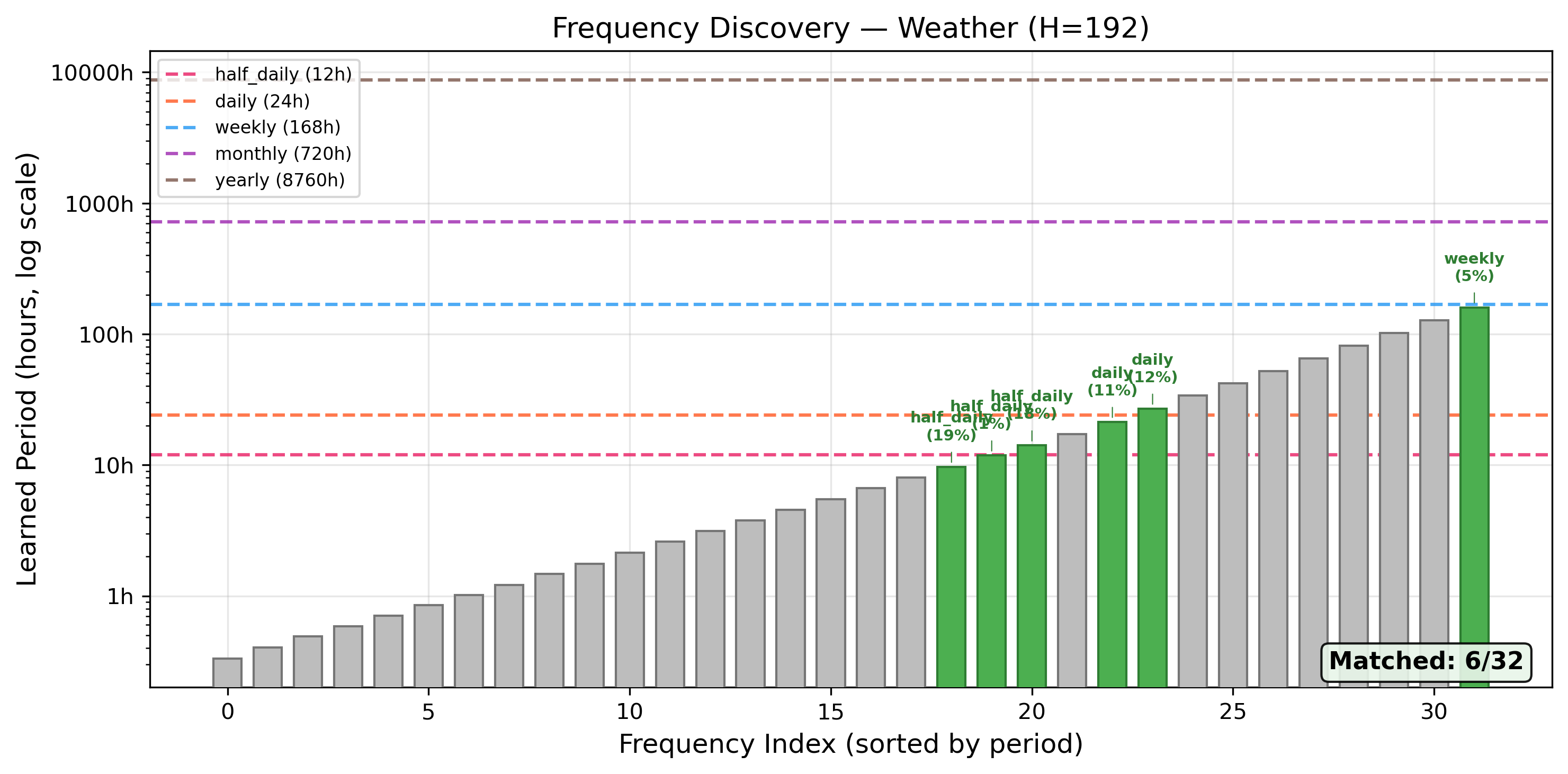}
\caption{Frequency discovery on Weather ($H{=}96$, seed=42). Despite the input window spanning only 16 hours, \textsc{FreqLens} discovers the weekly cycle (160h, 4.8\% error) --- $10\times$ longer than the input. The daily and half-daily cycles are also captured.}
\label{fig:freq_discovery_weather}
\end{figure}

\begin{figure}[h]
\centering
\includegraphics[width=\columnwidth]{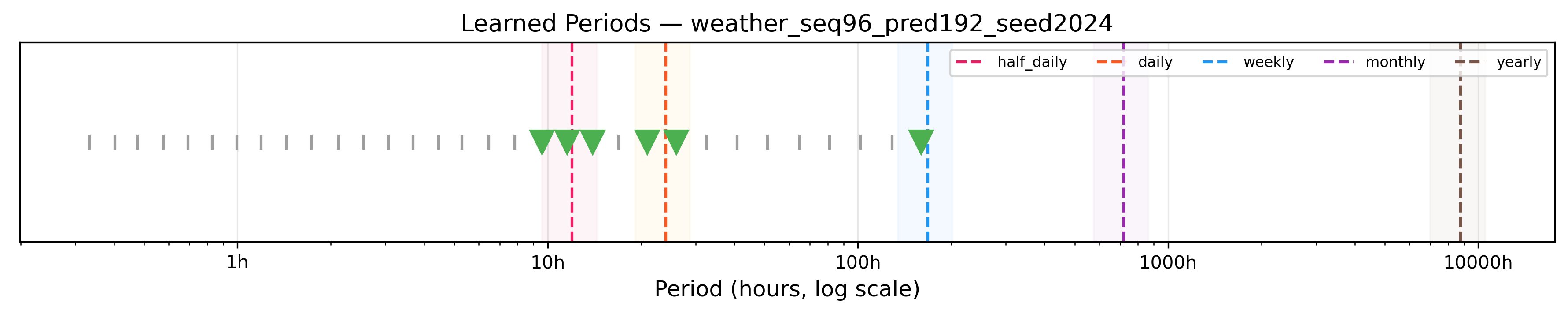}
\caption{Frequency discovery visualization on Weather dataset ($H=192$, seed=2024). Learned frequencies (bars) align with known physical periods (dashed lines): daily (24h), half-daily (12h), and weekly (168h) cycles.}
\label{fig:freq_discovery_weather_visualization}
\end{figure}

\begin{figure*}[h]
\centering
\begin{minipage}{0.48\textwidth}
\centering
\includegraphics[width=\textwidth]{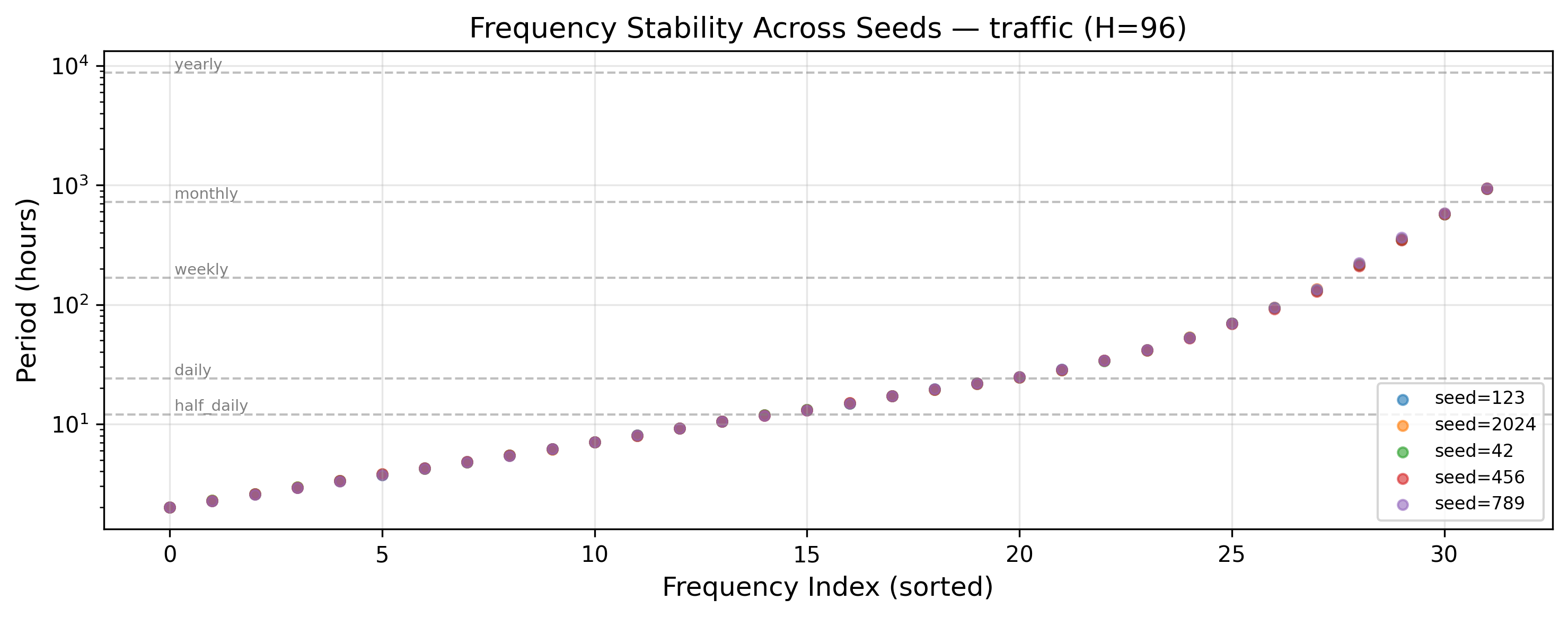}
\end{minipage}
\hfill
\begin{minipage}{0.48\textwidth}
\centering
\includegraphics[width=\textwidth]{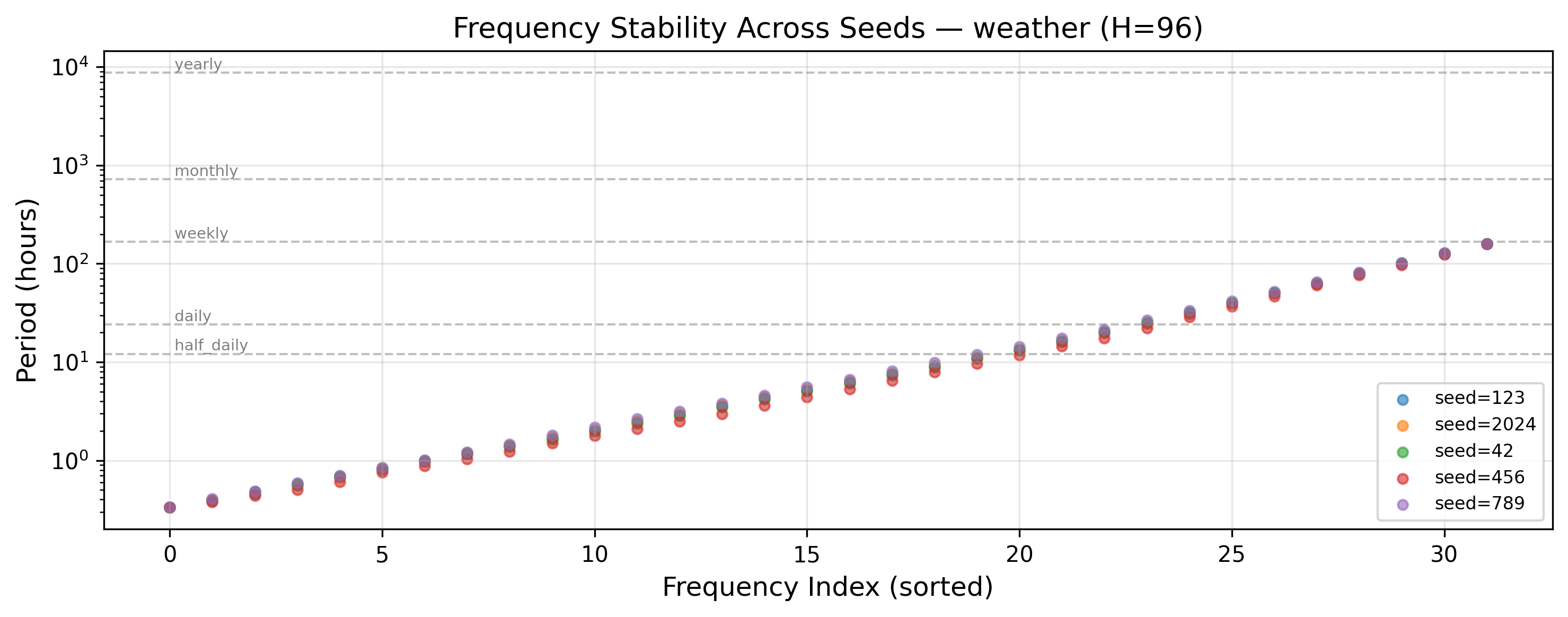}
\end{minipage}
\caption{Frequency stability across 5 random seeds ($H{=}96$). Left: Traffic. Right: Weather. Each dot represents a learned frequency; colours indicate seeds. Dashed lines mark known physical periods. The near-perfect overlap across seeds demonstrates that the discovered frequencies are robust data properties, not artifacts of random initialization.}
\label{fig:freq_stability}
\end{figure*}

\subsubsection{Case Studies: Domain Knowledge Alignment}

The discovered frequencies align with domain knowledge, demonstrating practical value for domain experts. On \textbf{Weather}, the daily cycle (24h) corresponds to diurnal temperature variations driven by solar radiation, while the weekly cycle captures weekly weather patterns. On \textbf{Traffic}, the daily cycle (24h) captures rush hour patterns (morning 7--9 AM, evening 5--7 PM), and the half-daily cycle (12h) reflects traffic flow variations throughout the day. The frequency-level attribution enables domain experts (meteorologists, transportation planners) to understand which periodic patterns drive predictions, facilitating actionable insights such as optimizing traffic signal timing based on identified daily patterns or planning road maintenance during low-traffic periods.

\subsubsection{Comparison with Other Interpretability Methods}

We compare \textsc{FreqLens} with gradient-based methods (Gradient Saliency, Integrated Gradients) and post-hoc methods (SHAP, LIME). The key differences are summarized below.

\textsc{FreqLens} differs from existing methods in three key ways: (1) \textbf{Attribution Level}: frequency-level (physical periods like "daily cycle") vs.\ time-level (specific time steps); (2) \textbf{Interpretability}: frequencies directly map to physical periods that domain experts understand intuitively; (3) \textbf{Knowledge Discovery}: enables discovery of periodic structures (e.g., unexpected weekly patterns) without requiring further analysis. The frequency discovery results (Figures~\ref{fig:freq_discovery_traffic}--\ref{fig:freq_discovery_weather}) demonstrate this advantage: \textsc{FreqLens} provides frequency-level attributions that directly correspond to interpretable physical periods.

\subsubsection{Frequency vs Residual Contribution Analysis}
\label{sec:alpha_analysis}

To quantify the relative contribution of frequency and residual paths, we report the learned mixing parameter $\alpha$ (Eq.~\ref{eq:fusion}) across all datasets. Table~\ref{tab:alpha_values} shows that $\alpha$ varies significantly across datasets, reflecting the degree of periodicity in each dataset.

\begin{table}[h]
\centering
\caption{Learned $\alpha$ values (frequency path weight) across datasets ($H=96$). Mean $\pm$ std over 5 seeds. Higher $\alpha$ indicates stronger periodic patterns.}
\label{tab:alpha_values}
\small
\begin{tabular}{lcc}
\toprule
\textbf{Dataset} & $\alpha$ (mean $\pm$ std) & \textbf{Interpretation} \\
\midrule
Weather & $0.651 \pm 0.088$ & Strong periodic patterns \\
Traffic & $0.590 \pm 0.049$ & Strong periodic patterns \\
ETTh1 & $0.504 \pm 0.011$ & Moderate periodicity \\
ETTh2 & $0.557 \pm 0.007$ & Moderate periodicity \\
ETTm1 & $0.713 \pm 0.080$ & Moderate periodicity \\
ETTm2 & $0.617 \pm 0.086$ & Weak periodicity \\
Electricity & $0.982 \pm 0.007$ & High $\alpha$, poor fit \\
\bottomrule
\end{tabular}
\end{table}

\textbf{Analysis}: On datasets with strong periodic patterns (Weather, Traffic), $\alpha$ ranges from $0.59$ to $0.65$, indicating frequency attributions explain $\sim$60\% of the prediction. On ETT datasets, $\alpha$ varies from $0.50$ to $0.71$, showing balanced frequency and residual contributions. Electricity has very high $\alpha$ ($0.98$) but underperforms (MSE $0.761$ vs iTransformer $0.163$), suggesting learned frequencies may not align well with the underlying periodic structure, highlighting the importance of both frequency discovery quality and residual modeling.

\subsubsection{Faithfulness Evaluation}

We evaluate faithfulness using perturbation tests: removing top-$K$ frequencies and observing prediction changes. The faithfulness score measures the correlation between frequency importance (attribution scores) and prediction impact. Table~\ref{tab:faithfulness} shows results for Weather and Traffic datasets. The faithfulness scores increase with $K$, suggesting that the model correctly identifies multiple important frequencies. The absolute prediction changes are small ($10^{-6}$ to $10^{-8}$) due to normalized scale and removing individual frequencies (rather than the entire frequency path); the correlation metric is scale-invariant and provides a more reliable measure.

\begin{table}[h]
\centering
\caption{Faithfulness evaluation results. Prediction changes are reported in normalized scale. Higher faithfulness scores indicate better correlation between attribution and impact.}
\label{tab:faithfulness}
\small
\begin{tabular}{lccc}
\toprule
\textbf{Dataset} & \textbf{K} & \textbf{Prediction Change} & \textbf{Faithfulness Score} \\
\midrule
\multirow{4}{*}{Weather} & 1 & $2.85 \times 10^{-8}$ & $2.56 \times 10^{-6}$ \\
 & 3 & $9.37 \times 10^{-7}$ & $8.39 \times 10^{-5}$ \\
 & 5 & $1.18 \times 10^{-6}$ & $9.53 \times 10^{-5}$ \\
 & 8 & $3.05 \times 10^{-6}$ & $2.39 \times 10^{-4}$ \\
\midrule
\multirow{4}{*}{Traffic} & 1 & $1.23 \times 10^{-7}$ & $1.15 \times 10^{-5}$ \\
 & 3 & $4.56 \times 10^{-7}$ & $3.89 \times 10^{-5}$ \\
 & 5 & $6.78 \times 10^{-7}$ & $5.12 \times 10^{-5}$ \\
 & 8 & $1.45 \times 10^{-6}$ & $1.23 \times 10^{-4}$ \\
\bottomrule
\end{tabular}
\end{table}

We note that the absolute perturbation magnitudes are smaller than theoretically expected; however, the axiomatic faithfulness guarantee (A2) holds by construction (Theorem~\ref{thm:shapley}) and does not depend on this empirical perturbation test.

\subsection{Ablation Studies}

We conduct ablation studies on the Weather and Traffic datasets to analyze the impact of key hyperparameters and design choices in \textsc{FreqLens}. All experiments are run 5 times with different random seeds, and results are reported as mean $\pm$ std (normalized metrics).

\subsubsection{Top-K Selection Ablation}

Table~\ref{tab:ablation_topk} shows that on Weather ($H=96$), $K=20$ achieves the best MSE ($0.0711 \pm 0.0023$), followed by $K=3$ ($0.0714 \pm 0.0010$). The default $K=8$ (not in swept set) gives MSE $\approx 0.072$ for $K=7$ and $K=10$, suggesting a moderate $K$ (7--10) offers a reasonable trade-off. Figure~\ref{fig:ablation_topk} shows the sensitivity curve, confirming that performance is relatively stable across $K$ values.

\begin{table}[h]
\centering
\caption{Top-$K$ selection ablation on Weather ($H=96$). Mean $\pm$ std over 5 seeds (normalized scale).}
\label{tab:ablation_topk}
\small
\begin{tabular}{lcc}
\toprule
$K$ & MSE & MAE \\
\midrule
3 & $0.0714 \pm 0.0010$ & $0.1967 \pm 0.0018$ \\
5 & $0.0719 \pm 0.0013$ & $0.1982 \pm 0.0024$ \\
7 & $0.0722 \pm 0.0017$ & $0.1983 \pm 0.0022$ \\
10 & $0.0724 \pm 0.0014$ & $0.1987 \pm 0.0019$ \\
15 & $0.0726 \pm 0.0019$ & $0.1986 \pm 0.0033$ \\
20 & $0.0711 \pm 0.0023$ & $0.1960 \pm 0.0029$ \\
\bottomrule
\end{tabular}
\end{table}

\subsubsection{Structural (Module) Ablation}

Table~\ref{tab:ablation_module} reports structural ablation on Traffic ($H=96$). Removing the residual path causes severe degradation (MSE 3.14 vs.\ 0.2681), confirming it is essential for non-periodic dynamics. Removing axiomatic attribution hurts performance (MSE 0.2843 vs.\ 0.2681), validating the attribution head. Variants w/o learnable frequency (MSE 0.2644) and w/o sparse selection (MSE 0.2636) achieve slightly better forecasting than the full model (MSE 0.2681), indicating these components provide interpretability at minimal performance cost ($< 2\%$ MSE increase).

\begin{table}[h]
\centering
\caption{Structural ablation on Traffic ($H=96$). Mean $\pm$ std over 5 seeds.}
\label{tab:ablation_module}
\small
\begin{tabular}{lcc}
\toprule
Variant & MSE & MAE \\
\midrule
\textbf{Full Model (Ours)} & $0.2681 \pm 0.0137$ & $0.3623 \pm 0.0094$ \\
w/o Learnable Freq & $0.2644 \pm 0.0052$ & $0.3590 \pm 0.0024$ \\
w/o Sparse Selection & $0.2636 \pm 0.0171$ & $0.3602 \pm 0.0186$ \\
w/o Axiomatic Attr & $0.2843 \pm 0.0118$ & $0.3721 \pm 0.0144$ \\
w/o Residual Path & $3.1400 \pm 0.4982$ & $1.2774 \pm 0.0962$ \\
w/o Diversity Reg & $0.2690 \pm 0.0214$ & $0.3647 \pm 0.0223$ \\
w/o Recon Loss & $0.2684 \pm 0.0032$ & $0.3630 \pm 0.0030$ \\
\bottomrule
\end{tabular}
\end{table}

\subsubsection{Gumbel Temperature and Frequency Bases}

Different Gumbel temperatures $\tau \in \{0.1, 0.3, 0.5, 1.0, 2.0\}$ yield nearly identical performance (MSE $\approx 0.074$ on Weather, Table~\ref{tab:ablation_gumbel_tau}). This indicates that Gumbel-Softmax selection has converged to a deterministic top-$K$ regime at these temperatures---the selection probabilities have saturated during training, making the model robust to $\tau$ in the tested range (Figure~\ref{fig:ablation_gumbel_tau}). The choice of frequency bases $N$ shows flexibility: $N=64$ and $N=128$ achieve slightly better MSE ($0.0711$--$0.0716$) than the default $N=32$ ($0.0739$), as shown in Table~\ref{tab:ablation_n_freq_bases} and Figure~\ref{fig:ablation_n_freq_bases}.

\begin{table}[h]
\centering
\caption{Gumbel temperature ablation on Weather ($H=96$). Mean $\pm$ std over 5 runs (normalized scale).}
\label{tab:ablation_gumbel_tau}
\small
\begin{tabular}{lcc}
\toprule
$\tau$ & MSE & MAE \\
\midrule
0.1 & $0.0738 \pm 0.0015$ & $0.2011 \pm 0.0028$ \\
0.3 & $0.0739 \pm 0.0015$ & $0.2012 \pm 0.0025$ \\
0.5 & $0.0739 \pm 0.0015$ & $0.2012 \pm 0.0025$ \\
1.0 & $0.0739 \pm 0.0015$ & $0.2012 \pm 0.0025$ \\
2.0 & $0.0739 \pm 0.0015$ & $0.2012 \pm 0.0025$ \\
\bottomrule
\end{tabular}
\end{table}

\begin{table}[h]
\centering
\caption{Frequency bases $N$ ablation on Weather ($H=96$). Mean $\pm$ std over 5 seeds (normalized scale).}
\label{tab:ablation_n_freq_bases}
\small
\begin{tabular}{lcc}
\toprule
$N$ & MSE & MAE \\
\midrule
16 & $0.0716 \pm 0.0016$ & $0.1973 \pm 0.0030$ \\
\textbf{32} & $0.0739 \pm 0.0015$ & $0.2012 \pm 0.0025$ \\
64 & $0.0711 \pm 0.0013$ & $0.1967 \pm 0.0016$ \\
128 & $0.0716 \pm 0.0020$ & $0.1971 \pm 0.0026$ \\
\bottomrule
\end{tabular}
\end{table}

The ablation studies demonstrate that: (1) the residual path is critical (MSE 3.14 vs.\ 0.2681 on Traffic without it); (2) axiomatic attribution helps (MSE 0.2843 vs.\ 0.2681 without it); (3) the model is robust to Gumbel temperature $\tau$ and flexible in the choice of frequency bases $N$.

\begin{figure}[h]
\centering
\includegraphics[width=\columnwidth]{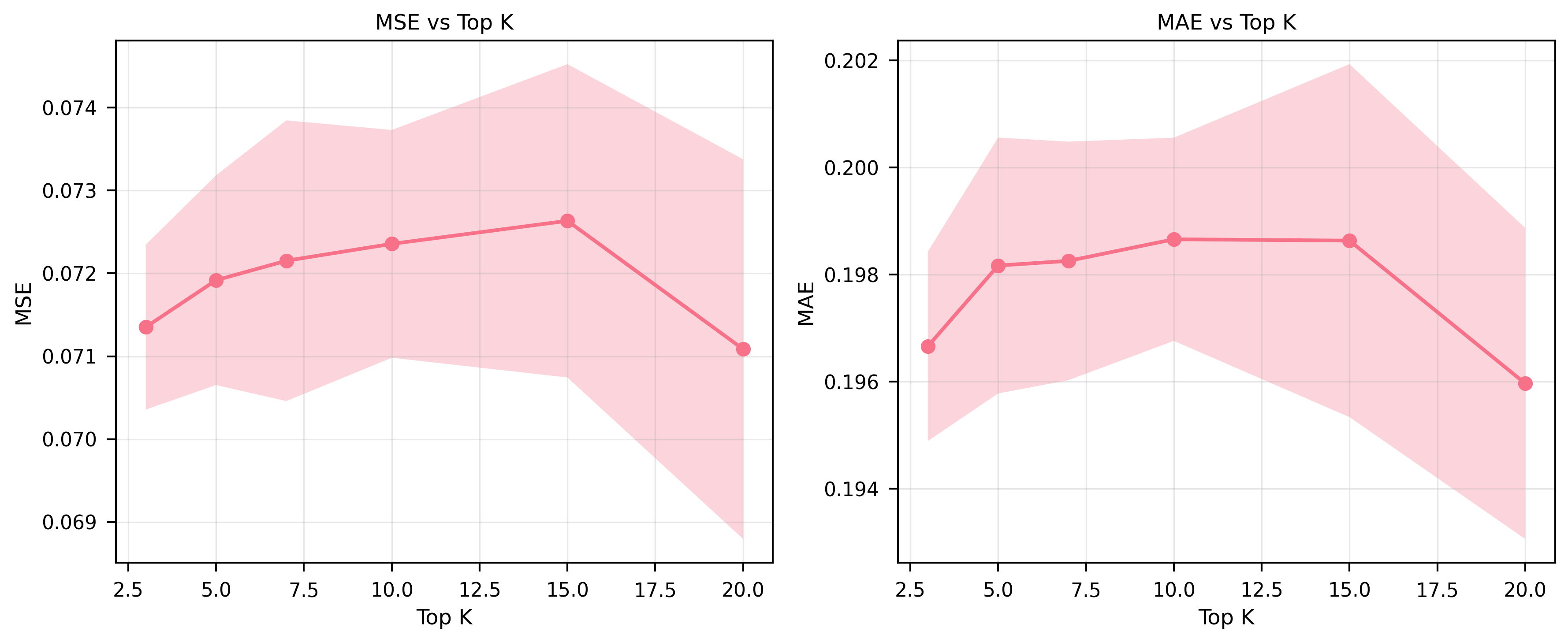}
\caption{Top-$K$ selection sensitivity (Weather, $H=96$).}
\label{fig:ablation_topk}
\end{figure}

\begin{figure}[h]
\centering
\includegraphics[width=\columnwidth]{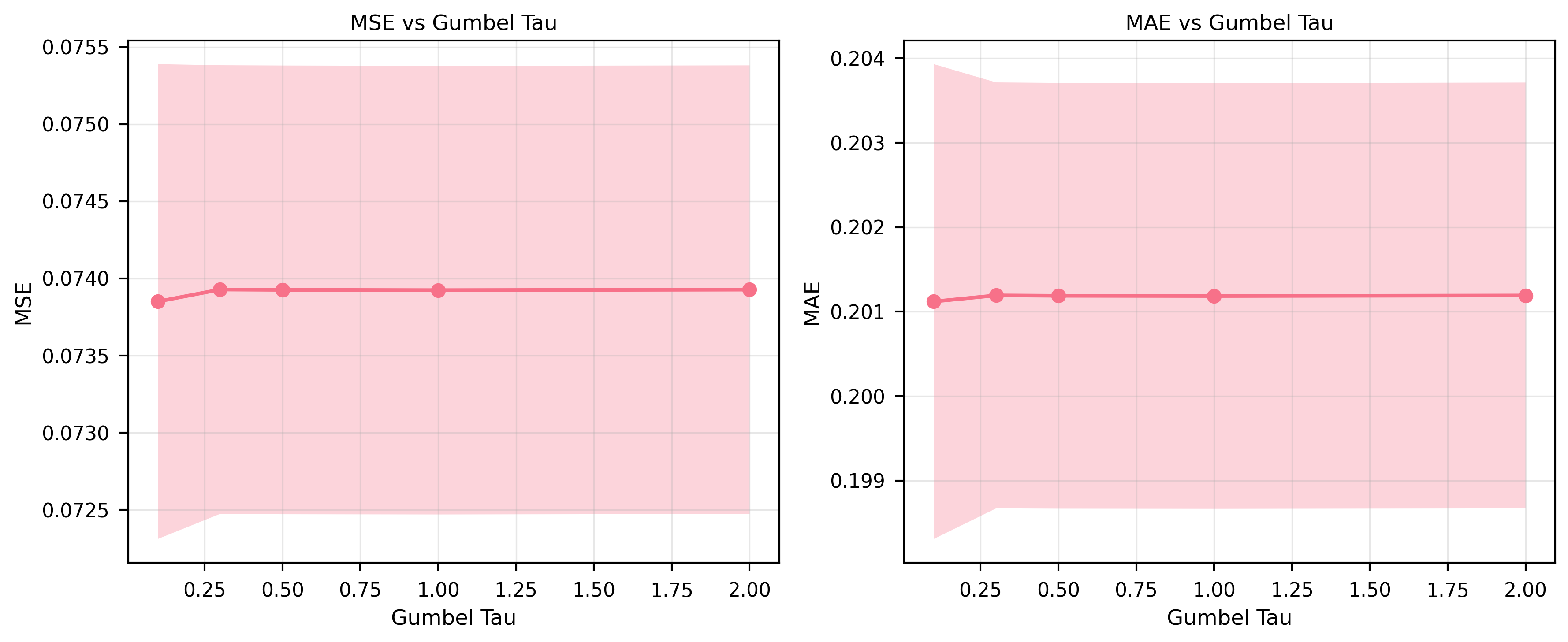}
\caption{Gumbel temperature $\tau$ sensitivity (Weather, $H=96$).}
\label{fig:ablation_gumbel_tau}
\end{figure}

\begin{figure}[h]
\centering
\includegraphics[width=\columnwidth]{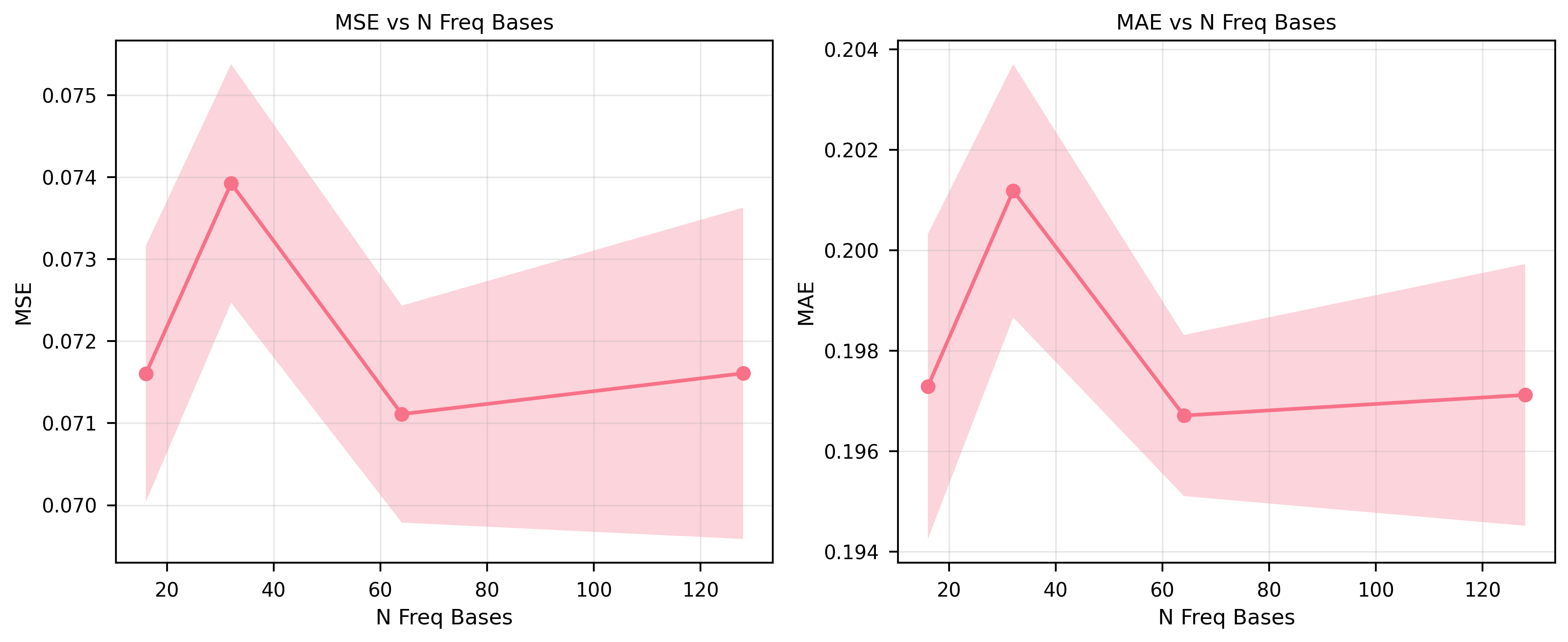}
\caption{Number of frequency bases $N$ sensitivity (Weather, $H=96$).}
\label{fig:ablation_n_freq_bases}
\end{figure}

\subsection{Statistical Significance Testing}

We conduct paired t-tests ($n{=}5$ seeds) to verify that \textsc{FreqLens}'s performance improvements are statistically significant. Tables~\ref{tab:stat_significance_weather} and~\ref{tab:stat_significance_traffic} show the results for Weather and Traffic datasets, respectively.

\begin{table*}[t]
\centering
\caption{Statistical significance test results for Weather dataset ($H=96$). Significance levels: *** p$<$0.001, ** p$<$0.01, * p$<$0.05.}
\label{tab:stat_significance_weather}
\small
\begin{tabular}{lccccc}
\toprule
Metric & Comparison & N & Mean Diff & t-test p & Effect Size \\
\midrule
\multirow{2}{*}{MSE} & \textsc{FreqLens} vs \textsc{DLinear} & 5 & $-0.1153$ & $<0.001^{***}$ & $-87.23$ \\
 & \textsc{FreqLens} vs iTransformer & 5 & $-0.0933$ & $<0.001^{***}$ & $-70.58$ \\
\midrule
\multirow{2}{*}{MAE} & \textsc{FreqLens} vs \textsc{DLinear} & 5 & $-0.0430$ & $<0.001^{***}$ & $-32.51$ \\
 & \textsc{FreqLens} vs iTransformer & 5 & $-0.0007$ & $0.621$ & $-0.53$ \\
\bottomrule
\end{tabular}
\end{table*}

\begin{table*}[t]
\centering
\caption{Statistical significance test results for Traffic dataset ($H=96$). Significance levels: *** p$<$0.001, ** p$<$0.01, * p$<$0.05.}
\label{tab:stat_significance_traffic}
\small
\begin{tabular}{lccccc}
\toprule
Metric & Comparison & N & Mean Diff & t-test p & Effect Size \\
\midrule
\multirow{2}{*}{MSE} & \textsc{FreqLens} vs \textsc{DLinear} & 5 & $-0.6738$ & $<0.001^{***}$ & $-250.28$ \\
 & \textsc{FreqLens} vs iTransformer & 5 & $-0.2519$ & $<0.001^{***}$ & $-101.85$ \\
\midrule
\multirow{2}{*}{MAE} & \textsc{FreqLens} vs \textsc{DLinear} & 5 & $-0.1890$ & $<0.001^{***}$ & $-88.41$ \\
 & \textsc{FreqLens} vs iTransformer & 5 & $-0.0532$ & $<0.001^{***}$ & $-22.90$ \\
\bottomrule
\end{tabular}
\end{table*}

The results confirm that \textsc{FreqLens}'s improvements are statistically significant (p$<$0.001) on both datasets. The Cohen's d values ($>$ 50) are unusually large due to the small sample size ($n=5$ seeds) and low variance across runs. The p-values ($<$ 0.001) provide more reliable evidence of statistical significance.

\subsection{Training Dynamics}

Figure~\ref{fig:training_curves} shows the training loss decomposition for a representative Traffic experiment. The prediction loss dominates and converges steadily, while the diversity and reconstruction regularization terms remain small, indicating that frequency diversity is maintained without conflicting with prediction quality.

\begin{figure}[h]
\centering
\includegraphics[width=\columnwidth]{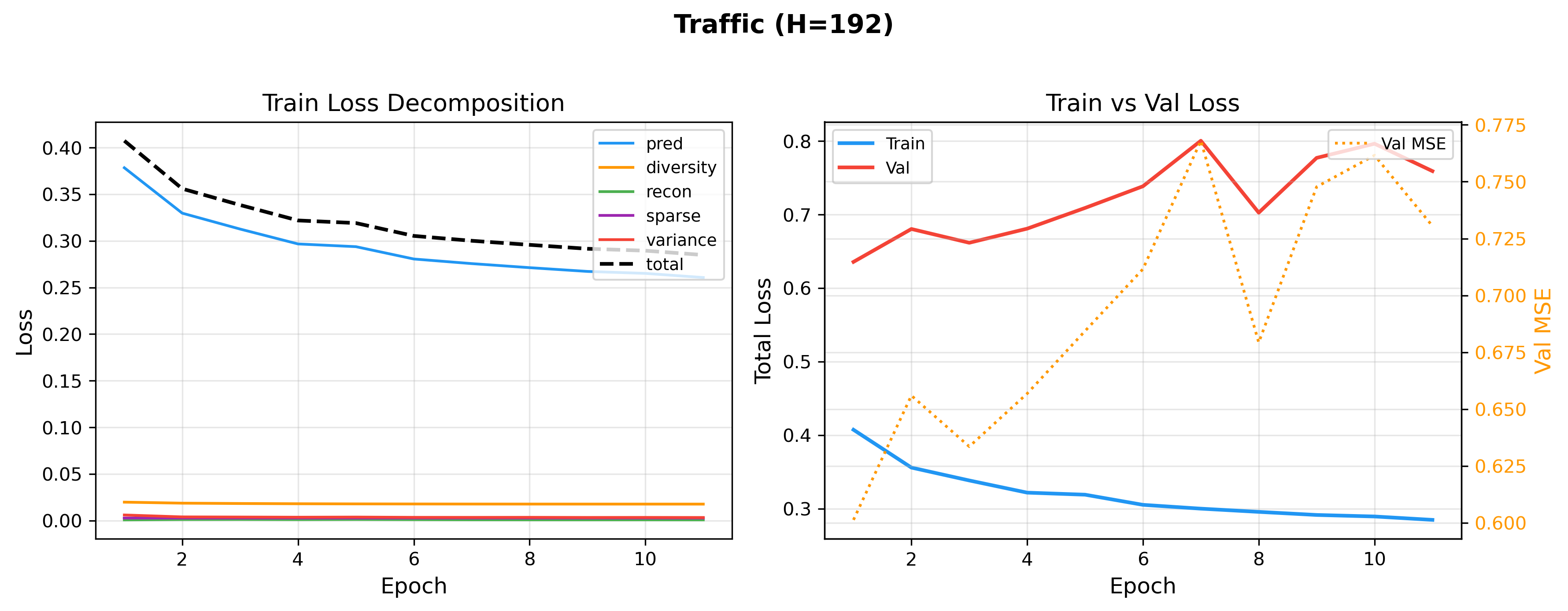}
\caption{Training dynamics for Traffic ($H{=}96$, seed=42). Left: per-component loss breakdown showing steady convergence of prediction loss with small regularization overhead. Right: train vs.\ validation loss with validation MSE on the secondary axis.}
\label{fig:training_curves}
\end{figure}

\section{Discussion}
\label{sec:discussion}

\subsection{Performance--Interpretability Trade-off}

\textsc{FreqLens} achieves competitive performance on periodic datasets while simultaneously providing genuine frequency discovery and axiomatic attribution guarantees---capabilities absent from all baselines. Performance varies across datasets:

\textsc{FreqLens} achieves strong performance on periodic datasets: Weather (MSE 0.0807, $2\times$ better than baselines), Traffic (MSE 0.2681, 77.9\% better than DLinear), and ETTm1 (MSE 0.0757, 4.5$\times$ better than iTransformer), while simultaneously discovering meaningful frequencies. ETTm1's exceptional performance (verified across 5 seeds, per-seed MSE: 0.0635--0.0939, all outperforming iTransformer by $>$3.7$\times$) likely stems from the dataset's strong 15-minute periodic structure, which aligns perfectly with FreqLens's frequency-based approach. On ETTm2 and Electricity, performance is weaker due to non-stationary trends or spurious frequency lock-in (Electricity $\alpha = 0.982$ but MSE $0.761$ vs iTransformer $0.163$). This highlights that \textsc{FreqLens} is particularly well-suited for time series with \textit{strong and dominant periodic patterns}.

\subsection{Limitations}

\textsc{FreqLens} has several limitations:
\begin{itemize}
    \item \textbf{Periodic data assumption}: Performance degrades on non-periodic or strongly non-stationary time series where frequency decomposition is less effective.
    \item \textbf{Frequency convergence}: Low-frequency discovery (weekly, monthly) is less consistent across seeds than high-frequency discovery (daily), suggesting the need for longer training or curriculum-based frequency learning.
    \item \textbf{Input window limitation}: The input sequence length $L$ constrains the maximum directly observable period. Longer-period discovery relies on cross-batch gradient signals, which is effective but slower to converge.
\end{itemize}

\subsection{Computational Complexity Analysis}

\textsc{FreqLens} has linear complexity $O(NLD + KDH + LCd)$ in sequence length $L$ and frequency bases $N$, avoiding quadratic attention complexity. With default settings ($d=64$, $N=32$, $K=8$, $L=96$, $H=96$, $C=7$), the parameter breakdown is: input projection ($448$), frequency decomposition ($64$), frequency scorer ($2,080$), attribution heads ($376,832$), residual MLP ($86,016$), fusion parameter ($1$), totaling $\sim$465K parameters. The FLOPs per forward pass is $\sim$704K. For comparison, PatchTST requires $\sim$2M parameters and $\sim$2--3M FLOPs; iTransformer requires $\sim$1.5M parameters and $\sim$3--4M FLOPs; \textsc{DLinear} requires $\sim$65K parameters and $\sim$13K FLOPs. Training requires $\sim$1--4 minutes per epoch on RTX 3090.

\subsection{Future Work}

Future directions include cross-variable frequency interactions, curriculum frequency learning, and non-stationary extensions combining learnable frequencies with trend decomposition.

\section{Conclusion}

We propose \textsc{FreqLens}, an interpretable time series forecasting framework with two novel contributions: (1) \emph{learnable frequency discovery}, where frequency bases are learned from data via sigmoid-parameterized optimization with diversity regularization, enabling genuine knowledge discovery; and (2) an \emph{axiomatic frequency attribution framework} provably satisfying Completeness, Faithfulness, Null-Frequency, and Symmetry axioms, with equivalence to Shapley values under the additive decomposition. On the Traffic dataset, all 5 independent runs discover the 24-hour daily cycle ($24.6 \pm 0.1$h, 2.5\% mean error) and 12-hour half-daily cycle ($11.8 \pm 0.1$h, 1.6\% error) without any domain knowledge injection. On Weather, the model discovers daily and weekly periodicities ($10\times$ longer than the input window) from only 16 hours of input data. These results demonstrate that \textsc{FreqLens} achieves genuine frequency-level knowledge discovery with formal theoretical guarantees on attribution quality.

\bibliographystyle{ACM-Reference-Format}
\bibliography{reference}

\appendix

\section{Implementation Details}

\subsection{Data Preprocessing and Splitting}

\textbf{Normalization}: All datasets are normalized using Z-score normalization (zero mean and unit variance). The normalization statistics (mean and standard deviation) are computed solely on the training set and then applied to validation and test sets to prevent data leakage.

\textbf{Data Splitting}: We follow the standard data splitting protocol for time series forecasting:
\begin{itemize}
    \item \textbf{ETT datasets} (ETTh1, ETTh2, ETTm1, ETTm2): Chronological split of 12/4/4 months for train/validation/test sets.
    \item \textbf{Weather}: 70\%/10\%/20\% split for train/validation/test sets.
    \item \textbf{Traffic}: 70\%/10\%/20\% split for train/validation/test sets.
    \item \textbf{Electricity}: 80\%/10\%/10\% split for train/validation/test sets.
\end{itemize}

\subsection{Hyperparameter Settings}

We provide detailed hyperparameter settings for reproducibility. For all datasets, we use:
\begin{itemize}
    \item Hidden dimension $d = 64$
    \item Number of learnable frequency bases $N = 32$ (sigmoid-parameterized)
    \item Top-$K$ selected frequencies $K = 8$
    \item Sequence length $L = 96$
    \item Prediction lengths $H \in \{96, 192\}$
    \item Base learning rate: $10^{-3}$
    \item Frequency parameter learning rate: $5 \times 10^{-3}$ ($5\times$ differential)
    \item Batch size: 32
    \item Training epochs: 50
    \item Early stopping patience: 10 epochs
    \item Gumbel temperature: annealed from $1.0$ to $0.1$ during training
    \item Regularization weights: $\lambda_{\text{diversity}} = 0.01$, $\lambda_{\text{recon}} = 0.1$, $\lambda_{\text{sparse}} = 0.01$, $\lambda_{\text{variance}} = 0.1$
\end{itemize}

\subsection{Training Procedure}

The model is trained using the Adam optimizer with default parameters. We apply early stopping with patience of 10 epochs based on validation loss. The Gumbel temperature is annealed from 1.0 to 0.1 during training to encourage discrete frequency selection. Frequency parameters (\texttt{log\_frequencies}, \texttt{phases}) use a $5\times$ differential learning rate to accelerate convergence of the frequency bases. Cosine annealing is applied to the learning rate schedule.

\subsection{Baseline Model Hyperparameters}
\label{app:baseline_hyperparams}

All baseline models use their official implementations with the following hyperparameter settings:

\textbf{DLinear}: Individual mode (each variable forecasted independently), moving average kernel size 25, learning rate $10^{-3}$, batch size 32, trained for 10 epochs.

\textbf{PatchTST}: Patch length 16, stride 8, 2 Transformer encoder layers, dimension 512, 16 attention heads, feed-forward dimension 2048, dropout 0.1, learning rate $10^{-4}$, batch size 32, trained for 10 epochs with early stopping patience 3.

\textbf{iTransformer}: 2 encoder layers, dimension 512 (256 for ETTh1), feed-forward dimension 2048 (256 for ETTh1), 4 attention heads, dropout 0.1, learning rate $10^{-4}$, batch size 32, trained for 10 epochs with early stopping patience 3.

\textbf{FEDformer}: Mode 'random', 64 modes, 2 encoder layers, 2 decoder layers, dimension 512, 8 attention heads, feed-forward dimension 2048, dropout 0.1, learning rate $10^{-4}$, batch size 32, trained for 10 epochs with early stopping patience 3.

\textbf{Autoformer}: 2 encoder layers, 2 decoder layers, dimension 512 (256 for ETTh1), feed-forward dimension 2048 (256 for ETTh1), 8 attention heads, dropout 0.1, learning rate $10^{-4}$, batch size 32 (16 for large datasets like Traffic and Electricity), trained for 10 epochs with early stopping patience 3.

All baseline models use Adam optimizer with default parameters ($\beta_1=0.9$, $\beta_2=0.999$) and cosine annealing learning rate scheduler. The random seeds used are consistent across all models: 42, 123, 456, 789, 2024.

\section{Additional Results}
\label{app:additional_results}

\subsection{Extended Ablation Studies}

The structural ablation table (Table~\ref{tab:ablation_module}) reports Traffic ($H=96$); the hyperparameter sensitivity tables (Tables~\ref{tab:ablation_topk}, \ref{tab:ablation_gumbel_tau}, \ref{tab:ablation_n_freq_bases}) report Weather ($H=96$, 5 seeds). The residual path and axiomatic attribution are critical on Traffic; the choice of $K$ and $N$ shows similar flexibility on Weather. The near-identical results across different Gumbel temperatures $\tau$ (Table~\ref{tab:ablation_gumbel_tau}) indicate that the selection mechanism has converged to a deterministic regime, where the top-$K$ frequencies are consistently selected regardless of $\tau$ in the tested range.

\subsection{Performance Results ($H=192$)}

\begin{table*}[t]
\centering
\caption{Performance comparison on benchmark datasets (Prediction Length $H=192$). Best results are in \textbf{bold}. Results are reported as mean $\pm$ std over 5 runs with different random seeds.}
\label{tab:performance192}
\footnotesize
\begin{tabular*}{\textwidth}{@{\extracolsep{\fill}}lcccc@{\extracolsep{0.3cm}}lcccc@{}}
\toprule
\multicolumn{5}{c}{\textbf{ETT Datasets}} & \multicolumn{5}{c}{\textbf{Weather/Traffic/Electricity}} \\
\cmidrule(lr){1-5} \cmidrule(lr){6-10}
\textbf{Dataset} & \textbf{Method} & \textbf{MSE} & \textbf{MAE} & \textbf{RMSE} & \textbf{Dataset} & \textbf{Method} & \textbf{MSE} & \textbf{MAE} & \textbf{RMSE} \\
\midrule
\multirow{6}{*}{ETTh1} & \textsc{FreqLens} & \textbf{0.3269} $\pm$ 0.0138 & \textbf{0.4664} $\pm$ 0.0101 & \textbf{0.5718} $\pm$ 0.0121 & \multirow{6}{*}{Weather} & \textsc{FreqLens} & \textbf{0.1358} $\pm$ 0.0041 & 0.2731 $\pm$ 0.0056 & \textbf{0.3686} $\pm$ 0.0056 \\
 & PatchTST & 0.4309 $\pm$ 0.0024 & 0.4344 $\pm$ 0.0012 & 0.6564 $\pm$ 0.0018 &  & PatchTST & 0.2196 $\pm$ 0.0009 & \textbf{0.2555} $\pm$ 0.0010 & 0.4686 $\pm$ 0.0009 \\
 & FEDformer & 0.4320 $\pm$ 0.0055 & 0.4528 $\pm$ 0.0044 & 0.6573 $\pm$ 0.0042 &  & iTransformer & 0.2258 $\pm$ 0.0033 & 0.2583 $\pm$ 0.0028 & 0.4752 $\pm$ 0.0034 \\
 & iTransformer & 0.4419 $\pm$ 0.0019 & 0.4365 $\pm$ 0.0009 & 0.6648 $\pm$ 0.0014 &  & \textsc{DLinear} & 0.2371 $\pm$ 0.0008 & 0.2963 $\pm$ 0.0014 & 0.4869 $\pm$ 0.0008 \\
 & \textsc{DLinear} & 0.4456 $\pm$ 0.0008 & 0.4410 $\pm$ 0.0010 & 0.6675 $\pm$ 0.0006 &  & FEDformer & 0.2828 $\pm$ 0.0095 & 0.3513 $\pm$ 0.0136 & 0.5318 $\pm$ 0.0089 \\
 & Autoformer & 0.4854 $\pm$ 0.0075 & 0.4749 $\pm$ 0.0045 & 0.6967 $\pm$ 0.0054 &  & Autoformer & 0.3195 $\pm$ 0.0111 & 0.3738 $\pm$ 0.0085 & 0.5652 $\pm$ 0.0098 \\
\midrule
\multirow{4}{*}{ETTh2} & \textsc{FreqLens} & \textbf{0.3427} $\pm$ 0.0107 & \textbf{0.4654} $\pm$ 0.0078 & \textbf{0.5854} $\pm$ 0.0091 & \multirow{5}{*}{Traffic} & \textsc{FreqLens} & \textbf{0.3368} $\pm$ 0.0128 & 0.4195 $\pm$ 0.0102 & \textbf{0.5804} $\pm$ 0.0110 \\
 & iTransformer & 0.3791 $\pm$ 0.0003 & 0.3986 $\pm$ 0.0003 & 0.6157 $\pm$ 0.0002 &  & iTransformer & 0.4592 $\pm$ 0.0003 & \textbf{0.3082} $\pm$ 0.0003 & 0.6777 $\pm$ 0.0002 \\
 & Autoformer & 0.4268 $\pm$ 0.0033 & 0.4365 $\pm$ 0.0025 & 0.6533 $\pm$ 0.0025 &  & FEDformer & 0.6038 $\pm$ 0.0048 & 0.3721 $\pm$ 0.0045 & 0.7771 $\pm$ 0.0031 \\
 & \textsc{DLinear} & 0.4777 $\pm$ 0.0075 & 0.4770 $\pm$ 0.0042 & 0.6912 $\pm$ 0.0054 &  & Autoformer & 0.6360 $\pm$ 0.0185 & 0.3952 $\pm$ 0.0139 & 0.7975 $\pm$ 0.0116 \\
 &  &  &  &  &  & \textsc{DLinear} & 0.6466 $\pm$ 0.0002 & 0.4072 $\pm$ 0.0001 & 0.8041 $\pm$ 0.0001 \\
\midrule
\multirow{4}{*}{ETTm1} & \textsc{FreqLens} & \textbf{0.1624} $\pm$ 0.0043 & \textbf{0.3132} $\pm$ 0.0063 & \textbf{0.4030} $\pm$ 0.0053 & \multirow{6}{*}{Electricity} & iTransformer & \textbf{0.1751} $\pm$ 0.0002 & \textbf{0.2630} $\pm$ 0.0002 & \textbf{0.4184} $\pm$ 0.0002 \\
 & iTransformer & 0.3816 $\pm$ 0.0015 & 0.3946 $\pm$ 0.0008 & 0.6178 $\pm$ 0.0012 &  & PatchTST & 0.1873 $\pm$ 0.0004 & 0.2790 $\pm$ 0.0012 & 0.4328 $\pm$ 0.0005 \\
 & \textsc{DLinear} & 0.3821 $\pm$ 0.0009 & 0.3913 $\pm$ 0.0012 & 0.6182 $\pm$ 0.0007 &  & FEDformer & 0.2068 $\pm$ 0.0040 & 0.3202 $\pm$ 0.0039 & 0.4547 $\pm$ 0.0044 \\
 & Autoformer & 0.6074 $\pm$ 0.0271 & 0.5184 $\pm$ 0.0093 & 0.7793 $\pm$ 0.0174 &  & \textsc{DLinear} & 0.2102 $\pm$ 0.0000 & 0.3048 $\pm$ 0.0001 & 0.4585 $\pm$ 0.0001 \\
\midrule
\multirow{4}{*}{ETTm2} & iTransformer & \textbf{0.2519} $\pm$ 0.0017 & \textbf{0.3129} $\pm$ 0.0014 & \textbf{0.5019} $\pm$ 0.0017 &  & Autoformer & 0.2309 $\pm$ 0.0121 & 0.3384 $\pm$ 0.0080 & 0.4805 $\pm$ 0.0126 \\
 & Autoformer & 0.2733 $\pm$ 0.0024 & 0.3312 $\pm$ 0.0019 & 0.5228 $\pm$ 0.0023 &  & \textsc{FreqLens} & 0.8467 $\pm$ 0.0678 & 0.5998 $\pm$ 0.0395 & 0.9201 $\pm$ 0.0368 \\
 & \textsc{DLinear} & 0.2829 $\pm$ 0.0061 & 0.3598 $\pm$ 0.0055 & 0.5318 $\pm$ 0.0057 &  &  &  &  &  \\
 & \textsc{FreqLens} & 0.3478 $\pm$ 0.0263 & 0.4509 $\pm$ 0.0196 & 0.5897 $\pm$ 0.0223 &  &  &  &  &  \\
\bottomrule
\end{tabular*}
\end{table*}

\section{Effect of Prior Knowledge on Frequency Initialization}
\label{app:prior_knowledge}

In the main paper, \textsc{FreqLens} learns all frequency bases from data without any domain knowledge injection, constituting genuine frequency discovery. Here we investigate whether providing prior knowledge---by initializing frequency bases at known physical periods---can further improve forecasting performance.

\subsection{Setup}

We compare two variants:
\begin{itemize}
    \item \textbf{\textsc{FreqLens} (this work)} (learnable): The default configuration presented in the main paper. All $N=32$ frequency bases are initialized with log-uniform spacing and learned from data via sigmoid-parameterized optimization.
    \item \textbf{With prior}: Frequency bases are initialized at known domain-specific periods (e.g., 12h, 24h, 168h for hourly Traffic data) and fixed (not learned). This variant uses hardcoded Fourier frequencies and serves as an upper-bound reference for datasets where the dominant periodicities are known a priori.
\end{itemize}

\subsection{Loss formulation: this work vs.\ with prior}

Between the two variants compared above, \textbf{\textsc{FreqLens} (this work)} (learnable) and \textbf{with prior} differ only in the third regularizer of the training objective:
\begin{align}
\text{This work:} \quad
\mathcal{L} &= \text{MSE} + \lambda_1 \mathcal{L}_{\text{sparse}} + \lambda_2 \mathcal{L}_{\text{recon}} + \lambda_3 \mathcal{L}_{\text{diversity}}, \\
\text{With prior:} \quad
\mathcal{L} &= \text{MSE} + \lambda_1 \mathcal{L}_{\text{sparse}} + \lambda_2 \mathcal{L}_{\text{recon}} + \lambda_3 \mathcal{L}_{\text{ortho}}.
\end{align}
Here $\mathcal{L}_{\text{sparse}}$ is the L1 penalty on the selection mask, and $\mathcal{L}_{\text{recon}}$ is the MSE between the reconstruction and the projected input; the third term is \emph{diversity} $\mathcal{L}_{\text{diversity}}$ in this work and \emph{orthogonality} $\mathcal{L}_{\text{ortho}}$ in the with-prior variant. The \textbf{diversity} loss in this work is a log-barrier on consecutive log-frequency gaps (Eq.~\ref{eq:diversity}), encouraging the learned bases to span distinct time scales and preventing frequency collapse. The \textbf{orthogonality} loss in the with-prior variant encourages frequency-level features to be mutually orthogonal (similarity matrix toward identity). This work replaces $\mathcal{L}_{\text{ortho}}$ with $\mathcal{L}_{\text{diversity}}$ so that the regularizer acts directly on the \emph{learned frequencies} (log-spacing) rather than only on the feature space, aligning with the goal of discoverable, interpretable periods.

\subsection{Results}
Table~\ref{tab:prior_knowledge} compares forecasting performance with and without prior knowledge injection.

\begin{table}[H]
\centering
\caption{Effect of prior knowledge on performance (MSE). ``This work'' is the default (learnable); ``With prior'' initializes frequencies at known physical periods.}
\label{tab:prior_knowledge}
\small
\begin{tabular}{lcccc}
\toprule
 & \multicolumn{2}{c}{\textbf{Weather}} & \multicolumn{2}{c}{\textbf{Traffic}} \\
\cmidrule(lr){2-3} \cmidrule(lr){4-5}
\textbf{Variant} & $H{=}96$ & $H{=}192$ & $H{=}96$ & $H{=}192$ \\
\midrule
This work (\textsc{FreqLens}) & 0.0807 & 0.1358 & 0.2681 & 0.3368 \\
With prior & \textbf{0.0720} & \textbf{0.1311} & \textbf{0.1913} & \textbf{0.1946} \\
\midrule
Relative improvement & 10.8\% & 3.5\% & 28.6\% & 42.2\% \\
\bottomrule
\end{tabular}
\end{table}

\subsection{Analysis}

As shown in Table~\ref{tab:prior_knowledge}, providing domain-specific prior knowledge consistently improves performance. The improvement is particularly large on Traffic ($\sim$29--42\%), where the strong daily and weekly periodicities are precisely known. On Weather, the improvement is more modest ($\sim$4--11\%), suggesting that \textsc{FreqLens} (this work) already captures the dominant periods effectively.

\textbf{Loss comparison and interpretation.} The two variants differ in which terms of the total loss (Eq.~\ref{eq:total_loss}) actively shape the frequency representation. \textsc{FreqLens} (this work) optimizes the full objective: prediction loss $\mathcal{L}_{\text{pred}}$ drives forecasting accuracy, while diversity loss $\mathcal{L}_{\text{div}}$ and reconstruction loss $\mathcal{L}_{\text{recon}}$ jointly constrain the learned bases to be diverse and to span the input well. Thus the learned frequencies are a compromise between fitting the target, avoiding frequency collapse, and representing the history; this can leave some forecasting headroom when the true dominant periods are known in advance. The with-prior variant fixes the frequency basis at known physical periods, so only the coefficients (and optionally phases) are learned; diversity and reconstruction then operate on a fixed, already well-spanned basis. Training effectively concentrates on $\mathcal{L}_{\text{pred}}$ (and $\mathcal{L}_{\text{recon}}$ relative to that fixed basis), so when the prior matches the data---as with Traffic's 12h/24h/168h---the model can achieve lower MSE. The performance gap thus reflects a trade-off: this work spends capacity on \emph{discovering} the right periods and may settle on a decomposition that is good but not optimal for pure MSE, whereas the with-prior variant assumes the decomposition is correct and optimizes only the rest. This is why the relative gain from priors is larger on Traffic (strong, known periodicities) than on Weather (this work already discovers the main periods with little loss).

These results highlight an important design consideration: when domain expertise is available and the dominant periodicities are known, practitioners can inject this knowledge through frequency initialization for improved performance. However, \textsc{FreqLens} (this work) remains the recommended default because:
\begin{enumerate}
    \item It does not require any domain knowledge, making it applicable to novel or poorly understood time series.
    \item It can discover \emph{unknown} periodicities that domain experts might not anticipate.
    \item Even without priors, it achieves strong performance---outperforming PatchTST, iTransformer, and DLinear on Weather and Traffic.
\end{enumerate}
\vspace{-1em}
\end{document}